\definecolor{DarkGreen}{rgb}{0.1,0.5,0.1}
\definecolor{DarkRed}{rgb}{0.5,0.1,0.1}
\definecolor{DarkBlue}{rgb}{0.1,0.1,0.5}
\definecolor{Black}{rgb}{0.0,0.0,0.0}
\newtheorem{theorem}{Theorem}[section]
\newtheorem{lemma}[theorem]{Lemma}
\newtheorem{corollary}[theorem]{Corollary}
\newtheorem{proposition}[theorem]{Proposition}
\theoremstyle{definition}
\newtheorem{definition}[theorem]{Definition}
\numberwithin{equation}{section}
\newcommand{\IGNORE}[1]{}
\newcommand\E{\mathbb{E}}
\renewcommand{\hat}{\widehat}
\renewcommand{\tilde}{\widetilde}
\newcommand{\defeq}{\stackrel{\small \mathrm{def}}{=}}
\renewcommand{\leq}{\leqslant}
\renewcommand{\le}{\leqslant}
\newcommand{\set}[1]{\left\{#1\right\}}
\newcommand{\Set}[1]{\left\{#1\right\}}
\newcommand\bits{\{0,1\}}
\newcommand{\Yhat}{\widehat{Y}}
\newcommand{\yhat}{\widehat{y}}
\newcommand{\Ytilde}{\widetilde{Y}}
\newcommand{\Rhat}{\widehat{R}}
\newcommand{\ind}{\mathbb{I}}
\def\shownotes{1}  
\newcommand{\authnote}[2]{{\marginpar{{\color{DarkRed} #1}}$\ll$\textsf{\footnotesize #1 notes: #2}$\gg$}}
\newcommand{\authnote}[2]{}
\title{Equality of Opportunity in Supervised Learning}
\author{Moritz Hardt\and Eric Price\and Nathan Srebro}
\begin{document}
\maketitle

\begin{abstract}
  We propose a criterion for discrimination against a specified
  sensitive attribute in supervised learning, where the goal is to
  predict some target based on available features. Assuming data about
  the predictor, target, and membership in the protected group are
  available, we show how to optimally \emph{adjust} any learned
  predictor so as to remove discrimination according to our
  definition. Our framework also improves incentives by shifting the
  cost of poor classification from disadvantaged groups to the
  decision maker, who can respond by improving the classification
  accuracy.

  In line with other studies, our notion is \emph{oblivious}: it
  depends only on the joint statistics of the predictor, the target
  and the protected attribute, but not on interpretation of individual
  features. We study the inherent limits of defining and identifying
  biases based on such oblivious measures, outlining what can and
  cannot be inferred from different oblivious tests.

  We illustrate our notion using a case study of FICO credit scores.
\end{abstract}

\section{Introduction}

As machine learning increasingly affects decisions in domains
protected by anti-discrimination law, there is much interest in
algorithmically measuring and ensuring fairness in machine learning.
In domains such as advertising, credit, employment, education, and criminal
justice, machine learning could help obtain more accurate predictions,
but its effect on existing biases is not well understood. Although reliance on
data and quantitative measures can help quantify and eliminate existing biases,
some scholars caution that algorithms can also introduce new biases or
perpetuate existing ones~\cite{BarocasS16}. In May 2014, the Obama
Administration's Big Data Working Group released a report~\cite{Podesta14}
arguing that discrimination can sometimes ``be the inadvertent outcome of the
way big data technologies are structured and used'' and pointed toward ``the
potential of encoding discrimination in automated decisions''. A subsequent
White House report~\cite{WhiteHouse16} calls for ``equal opportunity by design''
as a guiding principle in domains such as credit scoring.

Despite the demand, a vetted methodology for avoiding discrimination against
\emph{protected attributes} in machine learning is lacking. A na\"ive approach might
require that the algorithm should ignore all protected attributes such as
race, color, religion, gender, disability, or family status. However, this idea
of ``fairness through unawareness'' is ineffective due to the existence of
\emph{redundant encodings}, ways of predicting protected attributes from other
features~\cite{PedreshiRT08}.

Another common conception of non-discrimination is \emph{demographic parity}.
Demographic parity requires that a decision---such as accepting or denying a
loan application---be independent of the protected attribute. In the case of a
binary decision~$\Yhat\in\{0,1\}$ and a binary protected
attribute~$A\in\{0,1\},$ this constraint can be formalized by asking that
$\Pr\{\Yhat=1\mid A=0\}=\Pr\{\Yhat=1\mid A=1\}.$ In other words, membership in a
protected class should have no correlation with the decision. Through its
various equivalent formalizations this idea appears in numerous papers.
Unfortunately, as was already argued by Dwork et al.~\cite{DworkHPRZ12}, the
notion is seriously flawed on two counts. First, it doesn't ensure fairness.
Indeed, the notion permits that we accept qualified applicants in the
demographic~$A=0$, but unqualified individuals in~$A=1,$ so long as the
percentages of acceptance match. This behavior can arise naturally, when there
is little or no training data available within $A=1.$ Second, demographic parity
often cripples the utility that we might hope to achieve. Just imagine the
common scenario in which the target variable~$Y$---whether an individual
actually defaults or not---is correlated with~$A.$ Demographic parity would not
allow the ideal predictor $\Yhat = Y,$ which can hardly be considered
discriminatory as it represents the actual outcome. As a result, the loss in
utility of introducing demographic parity can be substantial.

In this paper, we consider non-discrimination from the perspective of supervised
learning, where the goal is to predict a true outcome~$Y$ from features $X$
based on labeled training data, while ensuring they are ``non-discriminatory''
with respect to a specified protected attribute $A$. As in the usual supervised
learning setting, we assume that we have access to labeled training data, in our
case indicating also the protected attribute $A$. That is, to samples from the
joint distribution of $(X,A,Y)$. This data is used to construct a predictor
$\hat{Y}(X)$ or $\hat{Y}(X,A)$, and we also use such data to test whether it is
unfairly discriminatory.

Unlike demographic parity, our notion always allows for the perfectly accurate
solution of~$\Yhat = Y.$ More broadly, our criterion is easier to achieve the more
accurate the predictor $\Yhat$ is, aligning fairness with the central goal in
supervised learning of building more accurate predictors.

The notion we propose is ``oblivious'', in that it is based only on
the joint distribution, or joint statistics, of the true target $Y$,
the predictions $\hat{Y}$, and the protected attribute $A$.  In
particular, it does not evaluate the features in $X$ nor the
functional form of the predictor $\hat{Y}(X)$ nor how it was derived.
This matches other tests recently proposed and conducted, including
demographic parity and different analyses of common risk scores. In
many cases, only oblivious analysis is possible as the functional form
of the score and underlying training data are not public. The only
information about the score is the score itself, which can then be
correlated with the target and protected attribute.  Furthermore, even
if the features or the functional form are available, going beyond
oblivious analysis essentially requires subjective interpretation or
casual assumptions about specific features, which we aim to avoid.

\subsection{Summary of our contributions}
We propose a simple, interpretable, and actionable framework for measuring and
removing discrimination based on protected attributes. We argue that, unlike
demographic parity, our framework provides a meaningful measure of
discrimination, while demonstrating in theory and experiment that we also
achieve much higher utility.  Our key contributions are as follows:
\begin{itemize}
\item We propose an easily checkable and interpretable notion of
  avoiding discrimination based on protected attributes. Our notion
  enjoys a natural interpretation in terms of graphical dependency
  models.  It can also be viewed as shifting the burden of uncertainty
  in classification from the protected class to the decision maker. In
  doing so, our notion helps to incentivize the collection of better
  features, that depend more directly on the target rather then the
  protected attribute, and of data that allows better prediction for
  all protected classes.
\item
We give a simple and effective framework for constructing classifiers
satisfying our criterion from an arbitrary learned predictor. Rather than
changing a possibly complex training pipeline, the result follows via a simple
post-processing step that minimizes the loss in utility.
\item We show that the Bayes optimal non-discriminating (according to
  our definition) classifier is the classifier derived from any Bayes
  optimal (not necessarily non-discriminating) regressor using our
  post-processing step.  Moreover, we quantify the loss that follows
  from imposing our non-discrimination condition in case the score we
  start from deviates from Bayesian optimality. This result helps to
  justify the approach of deriving a fair classifier via
  post-processing rather than changing the original training process.
\item
We capture the inherent limitations of our approach, as well as any other
oblivious approach, through a non-identifiability result showing that different
dependency structures with possibly different intuitive notions of fairness
cannot be separated based on any oblivious notion or test.
\end{itemize}

Throughout our work, we assume a source distribution over $(Y,X,A)$, where $Y$
is the target or true outcome (e.g. ``default on loan''), $X$ are the available
features, and $A$ is the protected attribute. Generally, the features $X$ may be
an arbitrary vector or an abstract object, such as an image. Our work does not
refer to the particular form~$X$ has.

The objective of supervised learning is to construct a (possibly randomized)
predictor $\Yhat=f(X, A)$ that predicts~$Y$ as is typically measured through a
loss function. Furthermore, we would like to require that $\Yhat$ {\em does not
discriminate with respect to $A$}, and the goal of this paper is to formalize
this notion.

\section{Equalized odds and equal opportunity}
We now formally introduce our first criterion.
\begin{definition}[Equalized odds]
We say that a predictor $\Yhat$ satisfies \emph{equalized odds} with
respect to protected attribute~$A$ and outcome~$Y,$ if
$\Yhat$ and $A$ are independent conditional on~$Y.$
\end{definition}
Unlike demographic parity, equalized odds allows $\Yhat$ to depend
on~$A$ but only through the target variable~$Y.$ As such, the
definition encourages the use of features that allow to directly
predict~$Y,$ but prohibits abusing~$A$ as a proxy for~$Y.$

As stated, equalized odds applies to targets and protected attributes taking
values in any space, including binary, multi-class, continuous or structured
settings.  The case of binary random variables $Y,\Yhat$ and~$A$ is of central
importance in many applications, encompassing the main conceptual and technical
challenges. As a result, we focus most of our attention on this case, in which
case equalized odds are equivalent to:
\[
\Pr\Set{\Yhat=1\mid A=0, Y=y} = \Pr\Set{\Yhat=1\mid A=1, Y=y}
,\quad y\in\{0,1\}
\]
For the outcome $y=1,$ the constraint requires that $\Yhat$ has equal \emph{true
  positive rates} across the two demographics $A=0$ and $A=1.$ For
$y=0,$ the constraint equalizes \emph{false positive rates}.  The
definition aligns nicely with the central goal of building highly
accurate classifiers, since $\Yhat=Y$ is always an acceptable
solution. However, equalized odds enforces that the accuracy is
equally high in all demographics, punishing models that perform
well only on the majority.

\subsection{Equal opportunity}
In the binary case, we often think of the outcome $Y=1$ as the ``advantaged''
outcome, such as ``not defaulting on a loan'', ``admission to a college'' or
``receiving a promotion''. A possible relaxation of equalized odds 
is to require non-discrimination only within the ``advantaged'' outcome group.
That is, to require that people who pay back their loan, have an equal
opportunity of getting the loan in the first place (without specifying any
requirement for
those that will ultimately default). This leads to a relaxation of our notion
that we call ``equal opportunity''.
\begin{definition}[Equal opportunity]
We say that a binary predictor~$\Yhat$ satisfies \emph{equal opportunity}
with respect to $A$ and $Y$ if
$\Pr\set{\Yhat=1\mid A=0, Y=1} = \Pr\set{\Yhat=1\mid A=1, Y=1}\,.$
\end{definition}
Equal opportunity is a weaker, though still interesting, notion of
non-discrimination, and thus typically allows for stronger utility as we shall
see in our case study.

\subsection{Real-valued scores}

Even if the target is binary, a real-valued predictive score
$R=f(X,A)$ is often used (e.g. FICO scores for predicting loan
default), with the interpretation that higher values of $R$ correspond
to greater likelihood of $Y=1$ and thus a bias toward predicting
$\Yhat=1$.  A binary classifier $\Yhat$ can be obtained by
thresholding the score, i.e. setting $\Yhat=\mathbb{I}\{R>t\}$ for
some threshold $t$.  Varying this threshold changes the trade-off
between sensitivity (true positive rate) and specificity (true negative rate).

Our definition for equalized odds can be applied also to such score
functions: a score $R$ satisfies equalized odds if $R$ is independent
of $A$ given $Y$.  If a score obeys equalized odds, then any
thresholding $\Yhat=\mathbb{I}\{R>t\}$ of it also obeys equalized odds
(as does any other predictor derived from $R$ alone).  

In Section~\ref{sec:achieving}, we will consider scores that might not satisfy
equalized odds, and see how equalized odds predictors can be derived
from them and the protected attribute $A$, by using different
(possibly randomized) thresholds depending on the value of $A$. 
The same is possible for equality of opportunity without the need for 
randomized thresholds.

\subsection{Oblivious measures}
As stated before, our notions of non-discrimination are \emph{oblivious} 
in the following formal sense.
\begin{definition}
  A property of a predictor $\Yhat$ or score $R$ is said to be
  \emph{oblivious} if it only depends on the joint distribution of
  $(Y,A,\Yhat)$ or $(Y,A,R)$, respectively.
\end{definition}
As a consequence of being oblivious, all the information we need to verify our
definitions is contained in the \emph{joint distribution} of predictor,
protected group and outcome, $(\Yhat, A, Y).$ In the binary case, when $A$ and
$Y$ are reasonably well balanced, the joint distribution of $(\Yhat, A, Y)$ is
determined by $8$ parameters that can be estimated to very high accuracy from
samples. We will therefore ignore the effect of finite sample perturbations and
instead assume that we know the joint distribution of $(\Yhat, A, Y).$

\section{Comparison with related work}

There is much work on this topic in the social sciences and legal scholarship;
we point the reader to Barocas and Selbst~\cite{BarocasS16} for an excellent
entry point to this rich literature. See also the survey by Romei and
Ruggieri~\cite{RomeiR14}, and the references at
\url{http://www.fatml.org/resources.html}.

In its various equivalent notions, demographic parity
appears in many papers, such
as~\cite{CaldersKP09,Zliobaite15,Zafar15} to name a few.
Zemel et al.~\cite{ZemelWSPD13} propose an interesting way of achieving
demographic parity by aiming to learn a representation of the data that is
independent of the protected attribute, while retaining as much information
about the features~$X$ as possible. Louizos et al.~\cite{LouizosSLWZ15} extend
on this approach with deep variational auto-encoders.
Feldman et al.~\cite{FeldmanFMSV15} propose a formalization of ``limiting
disparate impact''. For binary classifiers, the condition states that
$\Pr\set{ \Yhat = 1 \mid A = 0} \le 0.8\cdot \Pr\set{\Yhat = 1 \mid A = 1}.$
The authors argue that this corresponds to the ``80\% rule'' in the legal
literature.  The notion differs from demographic parity mainly in that it
compares the probabilities as a ratio rather than additively, and in that it
allows a one-sided violation of the constraint.

While simple and seemingly intuitive, demographic parity has serious conceptual
limitations as a fairness notion, many of which were pointed out in work of
Dwork et al.~\cite{DworkHPRZ12}. In our experiments, we will see that
demographic parity also falls short on utility.  Dwork et al.~\cite{DworkHPRZ12}
argue that a sound notion of fairness must be \emph{task-specific}, and
formalize fairness based on a hypothetical similarity measure $d(x,x')$
requiring similar individuals to receive a similar distribution over outcomes.
In practice, however, in can be difficult to come up with a suitable metric.
Our notion is task-specific in the sense that it makes critical use of the final
outcome~$Y,$ while avoiding the difficulty of dealing with the features~$X.$

In a recent concurrent work, Kleinberg, Mullainathan and Raghavan~\cite{KleinbergMR16} showed that in general a score that is \emph{calibrated within each group} does \emph{not} satisfy a criterion equivalent to equalized odds for binary predictors. This result highlights that calibration alone does not imply non-discrimination according to our measure. Conversely, achieving equalized odds may in general compromise other desirable properties of a score.

Early work of Pedreshi et al.~\cite{PedreshiRT08} and several follow-up works
explore a logical rule-based approach to non-discrimination. These approaches
don't easily relate to our statistical approach.

\section{Achieving equalized odds and equality of opportunity}
\label{sec:achieving}

We now explain how to find an equalized odds or equal opportunity
predictor $\Ytilde$ derived from a, possibly discriminatory, learned
binary predictor~$\Yhat$ or score~$R.$ We envision that~$\Yhat$ or~$R$
are whatever comes out of the existing training pipeline for the
problem at hand.  Importantly, we do not require changing the training
process, as this might introduce additional complexity, but rather
only a post-learning step.  In particular, we will construct a non-discriminating predictor which is derived from $\Yhat$ or $R$:

\begin{definition}[Derived predictor]
A predictor $\Ytilde$ is \emph{derived from a random variable $R$ and the
protected attribute~$A$} if it is a possibly randomized function of the random
variables $(R, A)$ alone.  In particular, $\Ytilde$ is independent of $X$
conditional on $(R, A).$
\end{definition}
The definition asks that the value of a derived predictor~$\Ytilde$
should only depend on $R$ and the protected attribute, though it may
introduce additional randomness. But the formulation of $\Ytilde$
(that is, the function applied to the values of $R$ and $A$), depends
on information about the joint distribution of $(R, A, Y).$ In other
words, this joint distribution (or an empirical estimate of it) is
required at training time in order to construct the
predictor~$\Ytilde$, but at prediction time we only have access to
values of $(R, A)$.  No further data about the underlying features
$X$, nor their distribution, is required.
\paragraph{Loss minimization.}
It is always easy to construct a trivial predictor satisfying
equalized odds, by making decisions independent of $X,A$ and $R$.  For
example, using the constant predictor $\Yhat=0$ or $\Yhat=1$.  The
goal, of course, is to obtain a {\em good} predictor satisfying the
condition.  To quantify the notion of ``good'', we consider a loss
function $\ell\colon\{0,1\}^2\rightarrow\mathbb{R}$ that takes a pair
of labels and returns a real number $\ell(\yhat,y)\in\mathbb{R}$ which
indicates the loss (or cost, or undesirability) of predicting $\yhat$
when the correct label is~$y.$ Our goal is then to design derived
predictors $\Ytilde$ that minimize the expected loss~$\E\ell(\Ytilde,
Y)$ subject to one of our definitions.

\subsection{Deriving from a binary predictor}
\label{sec:opt-binary}

We will first develop an intuitive geometric solution in the case where we
adjust a binary predictor~$\Yhat$ and $A$ is a binary protected attribute The
proof generalizes directly to the case of a discrete protected attribute with
more than two values. For convenience, we introduce the notation
\begin{equation}\label{eq:gamma}
\gamma_a(\Yhat) \defeq \left(
\Pr\set{ \Yhat = 1 \mid A=a, Y=0},\,
\Pr\set{ \Yhat = 1 \mid A=a, Y=1}
\right)\,.
\end{equation}
The first component of $\gamma_a(\Yhat)$ is the \emph{false positive rate} of
$\Yhat$ within the demographic satisfying $A=a.$ Similarly, the second
component is the \emph{true positive rate} of $\Yhat$ within $A=a.$
Observe that we can calculate $\gamma_a(\Yhat)$ given the
joint distribution of $(\Yhat, A, Y).$
The definitions of equalized odds and equal opportunity can be
expressed in terms of $\gamma(\Yhat)$,  as formalized in the following
straight-forward Lemma:
\begin{lemma}
\label{lem:gamma}
A predictor~$\Yhat$ satisfies:
\begin{enumerate}
\item  equalized odds if and only if
$\gamma_0(\Yhat)=\gamma_1(\Yhat),$ and
\item equal opportunity 
if and only if $\gamma_0(\Yhat)$ and
$\gamma_1(\Yhat)$ agree in the second component, i.e.,
$\gamma_0(\Yhat)_2 = \gamma_1(\Yhat)_2.$
\end{enumerate}
\end{lemma}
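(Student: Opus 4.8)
The plan is to prove both parts by directly unfolding the definitions, exploiting that $Y$, $\Yhat$ and $A$ are all binary so that each relevant conditional distribution is determined by a single probability.

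For part (1), I would start from the definition: $\Yhat$ satisfies equalized odds exactly when $\Yhat$ and $A$ are independent conditional on $Y$, i.e. $\Pr\set{\Yhat=\yhat\mid A=a,Y=y}=\Pr\set{\Yhat=\yhat\mid Y=y}$ for all $\yhat,a,y\in\{0,1\}$ (we assume throughout that $\Pr\set{A=a,Y=y}>0$ for all $a,y$, which is implicit in writing $\gamma_a(\Yhat)$). Since $\Yhat$ is binary, it suffices to check the case $\yhat=1$, the case $\yhat=0$ following by complementation. For each fixed $y$, the quantity $\Pr\set{\Yhat=1\mid A=a,Y=y}$ is independent of $a$ if and only if its two values at $a=0$ and $a=1$ coincide, i.e. $\Pr\set{\Yhat=1\mid A=0,Y=y}=\Pr\set{\Yhat=1\mid A=1,Y=y}$. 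Taking this for both $y=0$ and $y=1$ and comparing with the definition of $\gamma_a(\Yhat)$ in \eqref{eq:gamma}, we see that equalized odds is equivalent to the coordinate-wise vector equality $\gamma_0(\Yhat)=\gamma_1(\Yhat)$.

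For part (2), the definition of equal opportunity is literally the scalar equation $\Pr\set{\Yhat=1\mid A=0,Y=1}=\Pr\set{\Yhat=1\mid A=1,Y=1}$; by \eqref{eq:gamma} the left-hand side is $\gamma_0(\Yhat)_2$ and the right-hand side is $\gamma_1(\Yhat)_2$, so equal opportunity holds if and only if $\gamma_0(\Yhat)_2=\gamma_1(\Yhat)_2$.

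There is no genuine obstacle; the only step meriting a line of justification is the first reduction in part (1) --- that conditional independence of $\Yhat$ from $A$ given $Y$ is the same as $\Pr\set{\Yhat=1\mid A,Y}$ not depending on $A$, and that for binary $A$ and binary $\Yhat$ this collapses to the two displayed scalar equalities. The rest is immediate from the definitions, which is why the statement is flagged as straightforward.
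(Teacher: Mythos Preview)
Your proposal is correct and matches the paper's approach: the paper does not actually give a proof, noting only that the lemma is ``straight-forward,'' and your direct unfolding of the definitions (using that $\Yhat$, $A$, $Y$ are binary so each conditional is pinned down by a single probability) is exactly the argument this remark is pointing to.
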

For $a\in\{0,1\},$ consider the two-dimensional convex polytope defined as the
convex hull of four vertices:
\begin{align}
P_a(\Yhat)  \defeq
\mathrm{convhull}\Set{ (0,0), \gamma_a(\Yhat), \gamma_a(1-\Yhat),(1,1)}
\end{align}
Our next lemma shows that $P_0(\Yhat)$ and $P_1(\Yhat)$ characterize exactly
the trade-offs between false positives and true positives that we can achieve
with any derived classifier. The polytopes are visualized
in~Figure~\ref{fig:opt-binary}.
\begin{lemma}
\label{lem:derived}
A predictor $\Ytilde$ is derived if and only if for all $a\in\{0,1\},$
we have $\gamma_a(\Ytilde)\in P_a(\Yhat).$
\end{lemma}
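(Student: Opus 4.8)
The plan is to parametrize every derived predictor by a pair of ``conditional flip probabilities'' for each value of $A$, compute the resulting false/true positive rates, and recognize the set of attainable values as exactly the polytope $P_a(\Yhat)$.

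First I would prove the forward implication. Suppose $\Ytilde$ is derived from $\Yhat$ and $A$. Since the only randomness in $\Ytilde$ beyond $(\Yhat, A)$ is fresh, $\Ytilde$ is independent of $Y$ conditional on $(\Yhat, A)$; hence for each $a$ the behavior of $\Ytilde$ on the demographic $A = a$ is completely described by $q_a \defeq \Pr\set{\Ytilde = 1 \mid \Yhat = 1, A = a}$ and $p_a \defeq \Pr\set{\Ytilde = 1 \mid \Yhat = 0, A = a}$, with $p_a, q_a \in [0,1]$. Conditioning on the value of $\Yhat$ and invoking this conditional independence, the law of total probability gives, for each $y \in \{0,1\}$, that $\Pr\set{\Ytilde = 1 \mid A=a, Y=y}$ equals $q_a \Pr\set{\Yhat = 1 \mid A=a, Y=y} + p_a \Pr\set{\Yhat = 0 \mid A=a, Y=y}$, i.e.
\[
\gamma_a(\Ytilde) = q_a\, \gamma_a(\Yhat) + p_a\, \gamma_a(1-\Yhat),
\]
where I use $\gamma_a(1-\Yhat) = (1,1) - \gamma_a(\Yhat)$. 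As $(q_a, p_a)$ ranges over $[0,1]^2$, the right-hand side ranges over the image of the unit square under the linear map $(q,p) \mapsto q\,\gamma_a(\Yhat) + p\,\gamma_a(1-\Yhat)$; this image is the convex hull of the images of the four corners of the square, namely $(0,0)$, $\gamma_a(\Yhat)$, $\gamma_a(1-\Yhat)$, and $\gamma_a(\Yhat) + \gamma_a(1-\Yhat) = (1,1)$. That convex hull is exactly $P_a(\Yhat)$, so $\gamma_a(\Ytilde) \in P_a(\Yhat)$ for each $a$.

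For the converse, suppose we are given values $\gamma_a(\Ytilde) \in P_a(\Yhat)$ for $a \in \{0,1\}$. By the same identification of $P_a(\Yhat)$ with the image of the unit square, for each $a$ we may choose $q_a, p_a \in [0,1]$ with $q_a\,\gamma_a(\Yhat) + p_a\,\gamma_a(1-\Yhat) = \gamma_a(\Ytilde)$. The predictor that outputs $1$ with probability $q_a$ when $(\Yhat, A) = (1, a)$ and with probability $p_a$ when $(\Yhat, A) = (0, a)$ is by construction derived from $\Yhat$ and $A$, and by the forward computation it has exactly the prescribed false/true positive rates $\gamma_a(\cdot) = \gamma_a(\Ytilde)$ in both demographics; since our setting is oblivious, this realizes $\Ytilde$ as a derived predictor. (The content being used downstream is precisely that the achievable profiles $(\gamma_0,\gamma_1)$ are exactly $P_0(\Yhat)\times P_1(\Yhat)$.)

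The computations here are routine applications of the law of total probability. The one step that genuinely needs care is the geometric identification: one must notice that because $\gamma_a(\Yhat)$ and $\gamma_a(1-\Yhat)$ are complementary (they sum to $(1,1)$), the parallelogram they span together with the origin has its far vertex pinned exactly at $(1,1)$, so the reachable set is precisely $P_a(\Yhat)$ and not some other quadrilateral. A secondary point to be careful about is justifying the conditional independence $\Ytilde \indep Y \mid (\Yhat, A)$ for a derived predictor, since that is what licenses conditioning on $\Yhat$ inside the expressions above.
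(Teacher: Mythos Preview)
Your proposal is correct and follows essentially the same approach as the paper's own proof. The paper's argument is simply a terse version of yours: it observes that a derived predictor is specified by the four parameters $\Pr\{\Ytilde=1\mid \Yhat=\yhat,A=a\}$ and asserts without computation that these parametrize exactly the points of $P_a(\Yhat)$; you spell out the bilinear expression $\gamma_a(\Ytilde)=q_a\,\gamma_a(\Yhat)+p_a\,\gamma_a(1-\Yhat)$ and the identification of the image of $[0,1]^2$ with the stated convex hull, which is the content the paper leaves implicit.
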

\begin{proof}
Since a derived predictor $\Ytilde$ can only depend on $(\Yhat,A)$ and these
variables are binary, the predictor $\Ytilde$ is completely described by four
parameters in~$[0,1]$ corresponding to the probabilities 
$\Pr\Set{\Ytilde =1 \mid \Yhat=\yhat,
A=a}$ for $\yhat,a\in\bits.$
Each of these parameter choices leads to one of the points in $P_a(\Yhat)$ and
every point in the convex hull can be achieved by some parameter setting.
\end{proof}
\begin{figure}
\begin{center}
  \includegraphics[width=\textwidth]{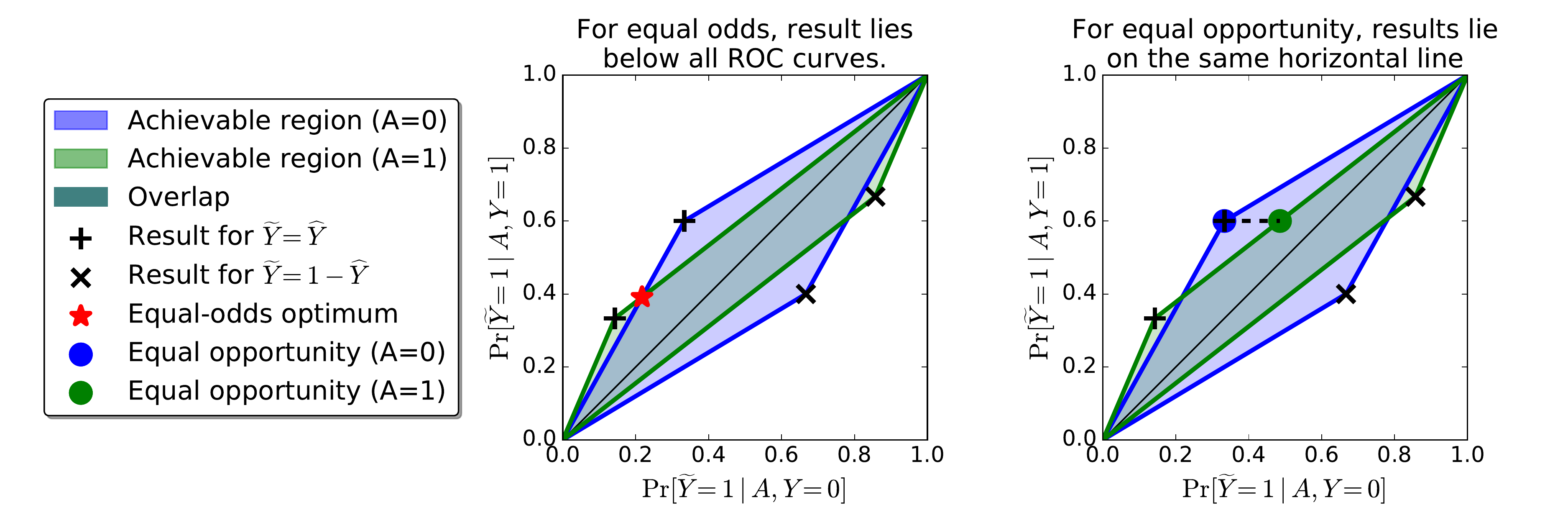}\vspace{-.3cm}
\end{center}
\caption{Finding the optimal equalized odds predictor (left), and equal
opportunity predictor (right).}
\label{fig:opt-binary}\vspace{-.5cm}
\end{figure}
Combining Lemma~\ref{lem:gamma} with Lemma~\ref{lem:derived}, we see that the
following optimization problem gives the optimal derived predictor with
equalized odds:
\begin{align}
\label{eq:opt-binary}
\min_{\Ytilde}\quad
 &\E\ell(\Ytilde, Y)\\
\mathrm{s.t.}\quad&\forall a\in\bits: \gamma_a(\Ytilde)\in P_a(\Yhat)\quad
\tag{\text{derived}}\\
& \gamma_0(\Ytilde) = \gamma_1(\Ytilde)\tag{\text{equalized odds}}
\end{align}
Figure~\ref{fig:opt-binary} gives a simple geometric picture for the solution of
the linear program whose guarantees are summarized next.
\begin{proposition}
\label{prop:opt-binary}
The optimization problem~\eqref{eq:opt-binary} is a linear program in four
variables whose coefficients can be computed from the joint distribution of
$(\Yhat, A, Y).$ Moreover, its solution is an optimal equalized odds predictor
derived from $\Yhat$ and $A.$
\end{proposition}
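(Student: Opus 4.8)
The plan is to parametrize derived predictors exactly as in the proof of Lemma~\ref{lem:derived}, by the four conditional probabilities $p_{\yhat,a} \defeq \Pr\{\Ytilde = 1 \mid \Yhat = \yhat, A = a\}$ for $\yhat, a \in \bits$, and then verify that in these coordinates the objective and all constraints of~\eqref{eq:opt-binary} are affine with coefficients computable from the joint law of $(\Yhat, A, Y)$. First I would observe that everything appearing in the program --- the loss $\E\ell(\Ytilde,Y)$, the rates $\gamma_a(\Ytilde)$, and the equalized-odds constraint --- depends only on the joint distribution of $(\Ytilde, A, Y)$; since $(\Yhat,A)$ takes only four values, we may without loss of generality restrict to derived predictors that flip an independent coin of bias $p_{\yhat,a}$ on the event $\{\Yhat=\yhat, A=a\}$, because any additional or correlated randomness leaves the relevant conditional marginals, hence the objective and constraints, unchanged. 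Under this parametrization the set of \emph{derived} predictors is exactly the cube $[0,1]^4$, which by Lemma~\ref{lem:derived} is the same feasible set as $\gamma_a(\Ytilde)\in P_a(\Yhat)$ for $a\in\bits$.

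Next I would write out the relevant maps explicitly. For the false/true positive rates,
\[
\Pr\{\Ytilde=1 \mid A=a, Y=y\} \;=\; \sum_{\yhat\in\bits} p_{\yhat,a}\,\Pr\{\Yhat=\yhat \mid A=a, Y=y\},
\]
so $\gamma_a(\Ytilde)$ is an affine function of $(p_{0,a},p_{1,a})$ with coefficients $\Pr\{\Yhat=\yhat\mid A=a,Y=y\}$ drawn from the joint distribution; consequently $\gamma_0(\Ytilde)=\gamma_1(\Ytilde)$ is a pair of linear equations in the four variables. For the objective, conditioning on $(A,Y)$ gives
\[
\E\ell(\Ytilde, Y) = \sum_{a,y\in\bits} \Pr\{A=a, Y=y\}\Bigl(\ell(1,y)\,\Pr\{\Ytilde=1\mid A=a,Y=y\} + \ell(0,y)\,\Pr\{\Ytilde=0\mid A=a,Y=y\}\Bigr),
\]
and substituting the previous display (and $\Pr\{\Ytilde=0\mid\cdot\}=1-\Pr\{\Ytilde=1\mid\cdot\}$) shows this is affine in $(p_{\yhat,a})$ with coefficients assembled from $\ell$ and the joint law. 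Hence~\eqref{eq:opt-binary} is a linear program in four variables with the claimed computability property.

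For the optimality claim: by Lemma~\ref{lem:derived} the feasible region of the program corresponds precisely to the derived predictors, and by part~(1) of Lemma~\ref{lem:gamma} appending the constraint $\gamma_0(\Ytilde)=\gamma_1(\Ytilde)$ restricts it to exactly those derived predictors satisfying equalized odds. This feasible set is nonempty (the constant predictor $\Ytilde\equiv 0$ corresponds to all $p_{\yhat,a}=0$, giving $\gamma_0=\gamma_1=(0,0)$) and compact, and the objective is continuous, so the minimum is attained; any minimizer is by construction a derived, equalized-odds predictor of least expected loss, i.e.\ an optimal equalized odds predictor derived from $\Yhat$ and $A$. I expect the only step requiring genuine care to be the reduction to the four-parameter family --- arguing that arbitrary, possibly exotically randomized derived predictors cannot beat the per-cell coin-flip family --- but this follows immediately from the obliviousness of the objective and constraints together with the finiteness of the range of $(\Yhat,A)$; everything else is routine bookkeeping.
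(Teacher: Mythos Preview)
Your proposal is correct and follows essentially the same approach as the paper: parametrize derived predictors by the four conditional probabilities $p_{\yhat,a}$, invoke Lemma~\ref{lem:derived} for the ``derived'' constraint and Lemma~\ref{lem:gamma} for equalized odds, and verify the objective is affine in these parameters. The only notable difference is in how linearity of the loss is shown: the paper conditions on the event $\{\Ytilde=\Yhat\}$ versus $\{\Ytilde\neq\Yhat\}$, whereas you condition on $(A,Y)$ and use the conditional independence $\Ytilde\perp Y\mid(\Yhat,A)$ built into the definition of a derived predictor---your decomposition is arguably more transparent, but the arguments are otherwise interchangeable.
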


\begin{proof}[Proof of Proposition~\ref{prop:opt-binary}]
The second claim follows by combining Lemma~\ref{lem:gamma} with
Lemma~\ref{lem:derived}. To argue the first claim, we saw
in the proof of Lemma~\ref{lem:derived} that a derived
predictor is specified by four parameters and the constraint region is an
intersection of two-dimensional linear constraints.
It remains to show that the objective function is a linear function in these
parameters. Writing out the objective, we have
\[
\E\left[\ell(\Ytilde, Y)\right]
= \sum_{y,y'\in\bits} \ell(y,y')\Pr\Set{\Ytilde=y',Y=y}\,.
\]
Further,
\begin{align*}
\Pr\Set{\Ytilde=y',Y=y}
& = \Pr\Set{\Ytilde=y',Y=y\mid \Ytilde=\Yhat}\Pr\Set{\Ytilde=\Yhat}\\
& \quad + \Pr\Set{\Ytilde=y',Y=y\mid \Ytilde\neq\Yhat}\Pr\Set{\Ytilde\neq\Yhat}\\
& = \Pr\Set{\Yhat=y',Y=y}\Pr\Set{\Ytilde=\Yhat}
+ \Pr\Set{\Yhat=1-y',Y=y}\Pr\Set{\Ytilde\neq\Yhat}\,.
\end{align*}
All probabilities in the last line that do not involve~$\Ytilde$ can be computed
from the joint distribution. The probabilities that do involve~$\Ytilde$ are each
a linear function of the parameters that specify~$\Ytilde.$
\end{proof}
The corresponding optimization problem for equation opportunity is the same
except that it has a weaker constraint $\gamma_0(\Ytilde)_2 =
\gamma_1(\Ytilde)_2$.
The proof is analogous to that of Proposition~\ref{prop:opt-binary}.
Figure~\ref{fig:opt-binary} explains the solution geometrically.

\subsection{Deriving from a score function}
\label{sec:opt-thresholds}

We now consider deriving non-discriminating predictors from a real
valued score~$R\in[0,1]$.  The motivation is that in many realistic
scenarios (such as FICO scores), the data are summarized by a
one-dimensional score function and a decision is made based on the
score, typically by thresholding it. Since a continuous statistic can
carry more information than a binary outcome~$Y$, we can hope to
achieve higher utility when working with~$R$ directly, rather then
with a binary predictor $\Yhat$.

A ``protected attribute blind'' way of deriving a binary predictor
from $R$ would be to threshold it, i.e.~using $\Yhat=\ind\set{R>t}$.
If $R$ satisfied equalized odds, then so will such a predictor, and
the optimal threshold should be chosen to balance false positive and
false negatives so as to minimize the expected loss.  When $R$ does
not already satisfy equalized odds, we might need to use different
thresholds for different values of $A$ (different protected groups),
i.e.~$\tilde{Y}=\ind\set{R>t_A}$.  As we will see, even this might not
be sufficient, and we might need to introduce additional randomness as
in the preceding section.

Central to our study is the ROC (Receiver Operator Characteristic)
curve of the score, which captures the false positive and true
positive (equivalently, false negative) rates at different thresholds.
These are curves in a two dimensional plane, where the horizontal axes
is the false positive rate of a predictor and the vertical axes is the
true positive rate.  As discussed in the previous section, equalized
odds can be stated as requiring the true positive and false positive
rates, ($\Pr\set{\Yhat=1 \mid Y=0,A=a},\Pr\set{\Yhat=1
  \mid Y=1,A=a}$), agree between different values of $a$ of the
protected attribute.  That is, that for all values of the protected
attribute, the conditional behavior of the predictor is at exactly the
same point in this space.  We will therefor consider the
$A$-conditional ROC curves
\[
C_a(t)\defeq\left(
\Pr\set{\Rhat > t \mid A=a, Y=0},
\Pr\set{\Rhat > t \mid A=a, Y=1}\right).
\]
Since the ROC curves exactly specify the conditional distributions $R|A,Y$,
a score function obeys equalized odds if and only if the ROC curves for all
values of the protected attribute agree, that is $C_a(t)=C_{a'}(t)$
for all values of $a$ and $t$.  In this case, any thresholding of $R$
yields an equalized odds predictor (all protected groups are at the
same point on the curve, and the same point in false/true-positive
plane).

When the ROC curves do not agree, we might choose different thresholds $t_a$
for the different protected groups.  This yields different points on
each $A$-conditional ROC curve.  For the resulting predictor to satisfy
equalized odds, these must be at the same point in the
false/true-positive plane.  This is possible only at points where all
$A$-conditional ROC curves intersect.  But the ROC curves might not all intersect
except at the trivial endpoints, and even if they do, their point of
intersection might represent a poor tradeoff between false positive
and false negatives.

As with the case of correcting a binary predictor, we can use
randomization to fill the span of possible derived predictors and
allow for significant intersection in the false/true-positive plane.
In particular, for every protected group $a$, consider the convex hull
of the image of the conditional ROC curve:
\begin{equation}\label{eq:conv-hull-threshold}
D_a  \defeq
\mathrm{convhull}\Set{ C_a(t)\colon t\in[0,1]}
\end{equation}
The definition of $D_a$ is analogous to the polytope $P_a$ in the
previous section, except that here we do not consider points below the
main diagonal (line from $(0,0)$ to $(1,1)$), which are worse than
``random guessing'' and hence never desirable for any reasonable loss
function.  

\paragraph{Deriving an optimal equalized odds threshold predictor.}
Any point in the convex hull~$D_a$ represents the false/true positive
rates, conditioned on $A=a$, of a randomized derived predictor based
on $R$.  In particular, since the space is only two-dimensional, such
a predictor~$\tilde Y$ can always be taken to be a mixture of two threshold
predictors (corresponding to the convex hull of two points on the
ROC curve). Conditional on $A=a,$ the predictor $\tilde Y$ behaves as
\[
\tilde{Y} = \ind\set{R>T_a}\,,
\]
where $T_a$ is a randomized threshold assuming the
value~$\underline{t}_a$ with probability~$\underline{p}_a$ and the
value~$\overline{t}_a$ with probability~$\overline{p}_a$.  In other
words, to construct an equalized odds predictor, we should choose a
point in the intersection of these convex hulls,
$\gamma=(\gamma_0,\gamma_1) \in \cap_a D_a$, and then for each
protected group realize the true/false-positive rates $\gamma$ with a
(possible randomized) predictor $\tilde{Y}|(A=a) = \ind\set{R>T_a}$
resulting in the predictor $\tilde{Y} = \Pr\ind\set{R>T_A}$.  For each
group~$a$, we either use a fixed threshold $T_a=t_a$ or a mixture of
two thresholds $\underline{t}_a<\overline{t}_a$.  In the latter case,
if $A=a$ and $R<\underline{t}_a$ we always set $\tilde{Y}=0$, if
$R>\overline{t}_a$ we always set $\tilde{Y}=1$, but if
$\underline{t}_a<R<\overline{t}_a$, we flip a coin and set
$\tilde{Y}=1$ with probability $\underline{p}_a$.
\begin{figure}
\begin{center}
  \includegraphics[width=\textwidth]{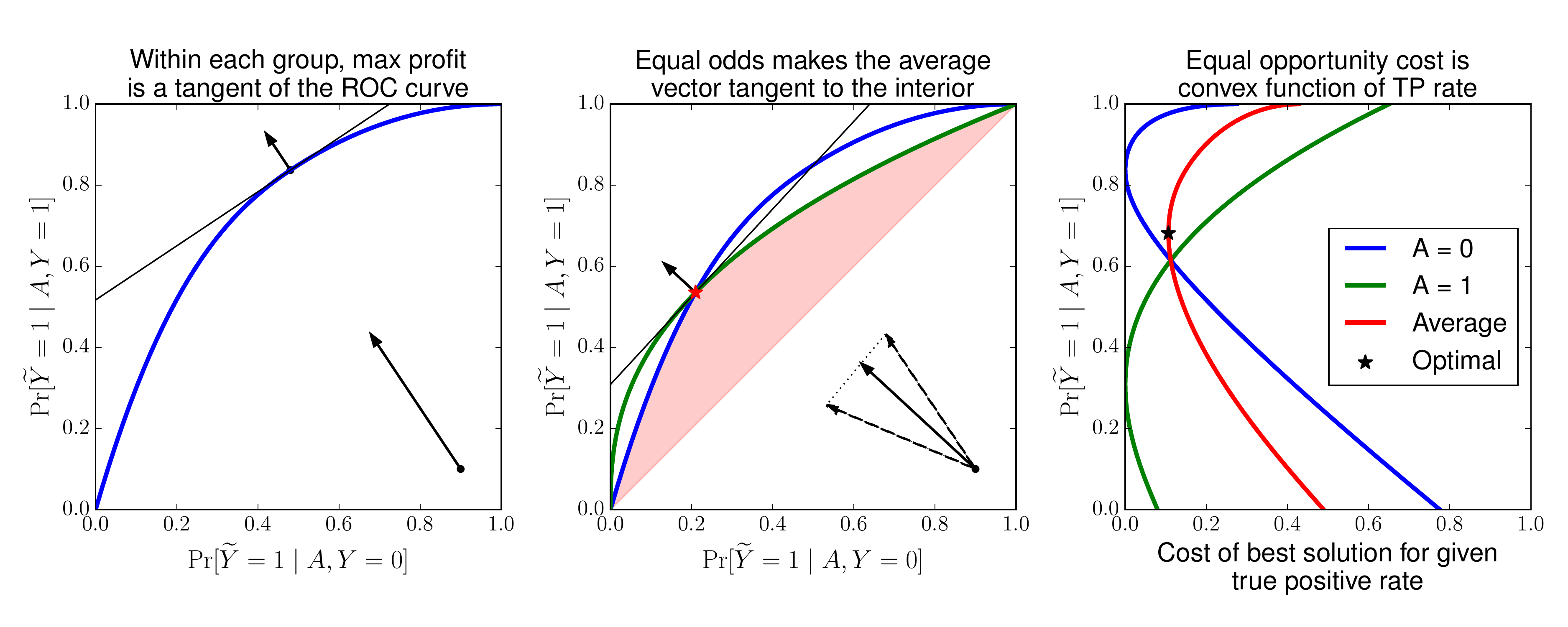}\vspace{-.5cm}
\end{center}
\caption{Finding the optimal equalized odds threshold predictor
  (middle), and equal opportunity threshold predictor (right).  For
  the equal opportunity predictor, within each group the cost for a
  given true positive rate is proportional to the horizontal gap between the ROC
  curve and the profit-maximizing tangent line (i.e., the two curves
  on the left plot), so it is a convex function of the true positive
  rate (right).  This lets us optimize it efficiently with ternary search.}
\label{fig:opt-binary-thresholds}\vspace{-.5cm}
\end{figure}

The feasible set of false/true positive rates of possible equalized
odds predictors is thus the intersection of the areas under the
$A$-conditional ROC curves, and above the main diagonal (see Figure
\ref{fig:opt-binary-thresholds}).  Since for any loss function the optimal
false/true-positive rate will always be on the upper-left boundary of
this feasible set, this is effectively the ROC curve of the equalized odds
predictors.  This ROC curve is the pointwise minimum of all $A$-conditional
ROC curves.  The performance of an equalized odds predictor is thus
determined by the minimum performance among all protected groups.
Said differently, requiring equalized odds incentivizes the learner to
build good predictors for {\em all} classes.
For a given loss function, finding the optimal tradeoff amounts to
optimizing (assuming w.l.o.g.~$\ell(0,0)=\ell(1,1)=0$):
\begin{equation}
  \label{eq:opt-obj-R}
  \min_{\forall a\colon \gamma\in D_a} \gamma_0\ell(1,0)+(1-\gamma_1)\ell(0,1)
\end{equation}
This is no longer a linear program, since $D_a$ are not polytopes, or
at least are not specified as such.  Nevertheless,
\eqref{eq:opt-obj-R} can be efficiently optimized numerically using
ternary search.

\paragraph{Deriving an optimal equal opportunity threshold predictor.}
The construction follows the same approach except that there is one fewer constraint.
We only need to find points on the conditional ROC curves that have the same
true positive rates in both groups. Assuming continuity of the conditional ROC
curves, this means we can always find points on the boundary of the
conditional ROC curves. In this case, no randomization is necessary. The
optimal solution corresponds to two deterministic thresholds, one for each
group. As before, the optimization problem can be solved efficiently using
ternary search over the target true positive value. Here we use, as
Figure~\ref{fig:opt-binary-thresholds} illustrates, that the cost of the best
solution is convex as a function of its true positive rate.

\section{Bayes optimal predictors}
\label{sec:bayes}
In this section, we develop the theory a theory for non-discriminating
Bayes optimal classification.  We will first show that a Bayes optimal
equalized odds predictor can be obtained as an derived threshold
predictor of the Bayes optimal regressor. Second, we quantify the loss
of deriving an equalized odds predictor based on a regressor that
deviates from the Bayes optimal regressor. This can be used to justify
the approach of first training classifiers without any fairness
constraint, and then deriving an equalized odds predictor in a second
step.
\begin{definition}[Bayes optimal regressor]
  Given random variables $(X,A)$ and a target variable~$Y,$ the
  \emph{Bayes optimal regressor} is
  $R=\arg\min_{r(x,a)}\E\left[(Y-r(X,A))^2\right]=r^*(X,A)$ with
  $r^*(x,a) = \E[Y \mid X=x, A=a].$
\end{definition}

The Bayes optimal classifier, for any proper loss, is then a threshold
predictor of~$R,$ where the threshold depends on the loss function
(see, e.g., \cite{Wasserman10}).  We will extend this result to the
case where we additionally ask the classifier to satisfy an oblivious
property, such as our non-discrimination properties.
\begin{proposition}
\label{lem:oblivious-optimal}
\label{prop:oblivious-optimal} For any source distribution over
$(Y,X,A)$ with Bayes optimal regressor $R(X,A)$, any loss function,
and any oblivious property $C$, there exists a predictor $Y^*(R,A)$
such that:
\begin{enumerate}
\item $Y^*$ is an optimal predictor satisfying~$C$.  That is,
  $\E\ell(Y^*,Y)\leq\E\ell(\Yhat,Y)$ for any predictor $\Yhat(X,A)$
  which satisfies~$C$.
\item $Y^*$ is derived from $(R,A).$
\end{enumerate}
\end{proposition}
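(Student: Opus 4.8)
The plan is to reduce an arbitrary predictor $\Yhat(X,A)$ satisfying the oblivious property $C$ to a derived predictor $Y^*(R,A)$ with no worse loss, and then observe that among derived predictors an optimal one exists. The key insight is that since $C$ is oblivious, it depends only on the joint distribution of $(Y,A,\Yhat)$, and likewise the expected loss $\E\ell(\Yhat,Y)$ depends only on this joint distribution; so it suffices to exhibit a derived predictor $Y^*$ whose joint distribution with $(Y,A)$ matches that of $\Yhat$.

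First I would fix the protected group $A=a$ and condition on it throughout. The heart of the argument is the observation that $R=r^*(X,A)=\E[Y\mid X,A]$ is a sufficient statistic for $Y$ in the following sense: conditional on $(R,A)$, the variable $Y$ is (in the relevant distributional sense) independent of $X$, because $\Pr\{Y=1\mid X=x,A=a\}=r^*(x,a)$ depends on $x$ only through $R=r^*(x,a)$. Hence, for any candidate predictor $\Yhat(X,A)$, I would define $Y^*$ by the conditional law $\Pr\{Y^*=1\mid R=r,A=a\}\defeq\E[\Yhat\mid R=r,A=a]$ — that is, $Y^*$ is a randomized function of $(R,A)$ that "averages out" $X$. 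The claim is then that $(Y^*,A,Y)$ and $(\Yhat,A,Y)$ have the same joint distribution. To see this, compute $\Pr\{Y^*=1,Y=1\mid A=a\}=\E_R[\Pr\{Y^*=1\mid R,A=a\}\,r^*(R,a)]=\E_R[\E[\Yhat\mid R,A=a]\,\Pr\{Y=1\mid R,A=a\}]$, and by the conditional independence of $\Yhat$-realization noise, $Y$, and $X$ given $(R,A)$ this equals $\E[\Yhat\,Y\mid A=a]=\Pr\{\Yhat=1,Y=1\mid A=a\}$; the same holds with $Y=0$, and the $A$-marginals agree trivially. Therefore $Y^*$ satisfies $C$ (obliviousness) and has the same expected loss as $\Yhat$.

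Now apply this to a loss-minimizing predictor: if $\Yhat$ ranges over all predictors satisfying $C$, passing to $Y^*$ shows that the infimum of $\E\ell(\cdot,Y)$ over $C$-predictors equals the infimum over $C$-predictors that are derived from $(R,A)$. The last step is to argue this infimum is attained by some derived predictor — the set of joint distributions of $(Y^*,A,Y)$ realizable by derived predictors is (after fixing the $(R,A,Y)$ distribution) parametrized by the measurable functions $r\mapsto\Pr\{Y^*=1\mid R=r,A=a\}\in[0,1]$, the loss and the oblivious constraint $C$ are continuous (or at least closed) in the resulting low-dimensional summary statistics, and a standard compactness argument gives a minimizer $Y^*(R,A)$. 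I would phrase the attainment either via compactness of the feasible region in the finite-dimensional space of achievable $(\gamma_0,\gamma_1)$-type statistics, or simply note that in the binary case this is subsumed by the linear-program / convex-hull picture of Section~\ref{sec:opt-binary} once $\Yhat$ is replaced by $R$.

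The main obstacle I anticipate is making the "sufficient statistic" step fully rigorous: I need that conditional on $(R,A)$, the triple consisting of the outcome $Y$, the features $X$, and any external randomness used by $\Yhat$ are mutually independent — the independence of $Y$ from $X$ given $(R,A)$ is immediate from the definition of $r^*$, but I should be careful that a randomized $\Yhat(X,A)$ uses randomness independent of $(X,A,Y)$, so that $\Yhat\perp Y\mid (R,A)$ as well, which is what licenses replacing $\E[\Yhat Y\mid A=a]$ by $\E[\E[\Yhat\mid R,A]\,\E[Y\mid R,A]\mid A=a]$. A secondary, more technical point is handling the case where $R$ is genuinely continuous, so that conditioning on $\{R=r\}$ must be done via regular conditional probabilities; this is routine measure theory and I would relegate it to a remark rather than belabor it.
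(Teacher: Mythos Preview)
Your proposal is correct and follows essentially the same route as the paper: both hinge on the observation that $Y$ is independent of $X$ given $(R,A)$ (since $R=\E[Y\mid X,A]$ and $Y$ is binary), and both construct a derived predictor $Y^*$ whose joint distribution with $(A,Y)$ coincides with that of $\Yhat$, so that the oblivious property $C$ and the loss are preserved. The paper phrases the construction slightly differently---it resamples a fresh $X'\sim(X\mid R,A)$ and sets $Y^*=f(X',A)$ rather than directly defining $\Pr\{Y^*=1\mid R,A\}=\E[\Yhat\mid R,A]$---but in the binary case these yield the same $Y^*$, and the paper simply omits the attainment/compactness discussion you add.
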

\begin{figure}[ht]
\begin{center}
  \begin{tikzpicture}[->,auto, semithick, scale=1,every node/.style={scale=1}]
    \tikzset{var/.style={draw=black,circle,minimum size=0.7cm, inner sep=0pt}};
    \node[var] (A) {$A$};
    \node[var, left of=A] (X) {$X$};
    \node[var, right of=A] (X2) {$X'$};
    \node[var] at ($ (A)!.5!(X) +  (0, 1) $) (R) {$R$};
    \node[var] at ($ (A)!.5!(X) -  (0, 1) $) (Yh) {$\Yhat$};
    \node[var, right of=R] (Y) {$Y$};
    \node[var, right of=Yh] (Ys) {$Y^*$};
    {
      \path (X) edge (R)
            edge (Yh)
        (A) edge (R)
            edge (X)
            edge (Yh)
            edge (Ys)
            edge (X2)
        (R) edge (X2)
            edge (Y)
        (X2) edge (Ys)
        (X) edge (Yh)
        ;
      }
    \end{tikzpicture}
\end{center}
    \caption{Graphical model for the proof of Proposition~\ref{prop:oblivious-optimal}.}
\label{fig:graphmodel}
\end{figure}
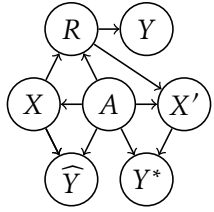
\begin{proof}
  Consider an arbitrary classifier $\Yhat$ on the attributes $(X,A)$,
  defined by a (possibly randomized) function $\Yhat=f(X,A).$ Given
  $(R=r, A=a)$, we can draw a fresh $X'$ from the distribution $(X
  \mid R=r, A=a)$, and set $Y^* = f(X', a)$.  This satisfies (2).
  Moreover, since $Y$ is binary with expectation $R$, $Y$ is
  independent of $X$ conditioned on $(R, A)$.  Hence $(Y, X, R, A)$
  and $(Y, X', R, A)$ have identical distributions, so $(Y^*, A, Y)$ and
  $(\Yhat, A, Y)$ also have identical distributions.  This implies $Y^*$
  satisfies (1) as desired.
\end{proof}

\begin{corollary}[Optimality characterization]
\label{cor:bayes-opt-fair}
An optimal equalized odds predictor can be derived from the Bayes
optimal regressor~$R$ and the protected attribute~$A.$ The same is true for
an optimal equal opportunity predictor.
\end{corollary}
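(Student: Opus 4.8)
The plan is to obtain this corollary as an immediate consequence of Proposition~\ref{prop:oblivious-optimal}. All that is needed is to check that equalized odds and equal opportunity are \emph{oblivious} properties in the sense of the Definition preceding Section~\ref{sec:achieving}, i.e.\ that each is a property of the joint distribution of $(Y,A,\Yhat)$ alone, and then instantiate the proposition with $R$ equal to the Bayes optimal regressor.

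First I would verify obliviousness. By Definition, equalized odds asks that $\Yhat$ and $A$ be independent conditional on $Y$; whether this holds is determined entirely by the joint law of $(Y,A,\Yhat)$, so equalized odds is oblivious. Likewise, equal opportunity is the single equality $\Pr\set{\Yhat=1\mid A=0,Y=1}=\Pr\set{\Yhat=1\mid A=1,Y=1}$, which again depends only on that joint law. (In the binary case one can alternatively appeal to Lemma~\ref{lem:gamma}, which expresses both conditions purely through the vectors $\gamma_a(\Yhat)$, themselves functions of the joint distribution.) Let $C$ denote either of these two properties.

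Next I would simply apply Proposition~\ref{prop:oblivious-optimal} with this $C$ and with $R=r^*(X,A)=\E[Y\mid X,A]$ the Bayes optimal regressor. It produces a predictor $Y^*(R,A)$ that is derived from $(R,A)$ and satisfies $\E\ell(Y^*,Y)\le\E\ell(\Yhat,Y)$ for every predictor $\Yhat(X,A)$ satisfying $C$. Since $R$ is itself a function of $(X,A)$, $Y^*$ is a (possibly randomized) function of $(X,A)$ that satisfies $C$; hence $Y^*$ lies in the class over which we are minimizing and therefore attains the minimum. Thus $Y^*$ is an optimal equalized odds (resp.\ equal opportunity) predictor and is derived from $R$ and $A$, which is exactly the claim. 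Running this argument once for $C={}$equalized odds and once for $C={}$equal opportunity yields both halves of the corollary.

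I do not expect a genuine obstacle here; the substance is entirely contained in Proposition~\ref{prop:oblivious-optimal}. The only points requiring a moment's care are (i) confirming that these constraint-type conditions count as ``oblivious properties'' as the proposition intends, and (ii) the small quantifier check that $Y^*$ itself belongs to the feasible family against which its optimality is asserted. If one wishes to say more, combining with Section~\ref{sec:opt-thresholds} shows that the derived predictor $Y^*$ can in fact be taken to have the explicit form of a group-dependent, possibly randomized, threshold rule on $R$, with two deterministic thresholds sufficing in the equal opportunity case.
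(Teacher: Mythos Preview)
Your proposal is correct and matches the paper's approach: the corollary is stated immediately after Proposition~\ref{prop:oblivious-optimal} with no separate proof, the implicit argument being exactly the one you give---observe that equalized odds and equal opportunity are oblivious properties and instantiate the proposition. Your extra care in points (i) and (ii) is fine but not strictly necessary, since the proposition already asserts that $Y^*$ satisfies $C$, and $Y^*$ being a function of $(R,A)=(r^*(X,A),A)$ is automatically a predictor of the required form.
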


\subsection{Near optimality}
We can furthermore show that if we can approximate the (unconstrained)
Bayes optimal regressor well enough, then we can also construct a
nearly optimal non-discriminating classifier.

To state the result, we introduce the following distance measure on random
variables.
\begin{definition}
We define the \emph{conditional Kolmogorov distance}
between two random variables $R,R'\in[0,1]$ in the same probability space
as~$A$ and~$Y$ as:
\begin{equation}
d_{\mathrm{K}}(R, R')\defeq
\max_{a,y\in\{0,1\}} \sup_{t\in[0,1]}
\left|\Pr\Set{R > t\mid A=a, Y=y}
- \Pr\Set{R' > t\mid A=a, Y=y}
\right|\,.
\end{equation}
\end{definition}
Without the conditioning on $A$ and $Y,$ this definition coincides with the
standard Kolmogorov distance. Closeness in Kolmogorov distance is a rather weak
requirement. We need the slightly stronger condition that the Kolmogorov
distance is small for each of the four conditionings on $A$ and $Y.$ This
captures the distance between the restricted ROC curves, as formalized next.
\begin{lemma}\label{lem:pythagoras}
Let $R, R'\in[0,1]$ be random variables in the same probability space as $A$ and
$Y.$ Then, for any point $p$ on a restricted ROC curve of $R,$ there is a point
$q$ on the corresponding restricted ROC curve of $R'$ such that
$\|p-q\|_2\le\sqrt{2}\cdot d_K(R, R').$
\end{lemma}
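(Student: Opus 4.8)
The plan is to match $p$ with the point obtained from the \emph{same} threshold on the ROC curve of $R'$. Since $p$ lies on a restricted ROC curve of $R$, write $p = C_a(t) = \bigl(\Pr\Set{R>t\mid A=a,Y=0},\,\Pr\Set{R>t\mid A=a,Y=1}\bigr)$ for some protected group $a\in\{0,1\}$ and threshold $t\in[0,1]$. I would take as candidate the point $q_0 \defeq \bigl(\Pr\Set{R'>t\mid A=a,Y=0},\,\Pr\Set{R'>t\mid A=a,Y=1}\bigr)$, i.e. the point at threshold $t$ on the $A=a$ conditional ROC curve of $R'$.

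The estimate is then immediate coordinatewise. Each coordinate of $p-q_0$ is of the form $\Pr\Set{R>t\mid A=a,Y=y}-\Pr\Set{R'>t\mid A=a,Y=y}$ for $y=0$ and $y=1$ respectively, so by the definition of the conditional Kolmogorov distance each of the two coordinates is at most $d_K(R,R')$ in absolute value. Hence
\[
\|p-q_0\|_2=\sqrt{(p_1-(q_0)_1)^2+(p_2-(q_0)_2)^2}\le\sqrt{d_K(R,R')^2+d_K(R,R')^2}=\sqrt{2}\,d_K(R,R')\,.
\]
This is really just the elementary inequality $\sqrt{u^2+v^2}\le\sqrt{2}\max(|u|,|v|)$ applied to the two coordinates, which is also why the lemma carries the name of Pythagoras.

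The only point needing a little care is the qualifier ``restricted'': we must ensure $q$ lies on the \emph{restricted} ROC curve of $R'$, i.e. on its portion at or above the main diagonal, whereas the candidate $q_0$ could in principle dip just below the diagonal even though $p$ (being on a restricted ROC curve) lies at or above it. In that case I would replace $q_0$ by the point where the segment from $p$ to $q_0$ crosses the diagonal — equivalently, the projection of $q_0$ onto the diagonal — which lies on the restricted ROC curve of $R'$ and is no farther from $p$ than $q_0$ is, so the bound $\|p-q\|_2\le\sqrt{2}\,d_K(R,R')$ is preserved. I do not anticipate any genuine obstacle: the content of the lemma is the two-line coordinatewise argument above, and the write-up is mainly a matter of fixing notation for the restricted ROC curve and disposing of this diagonal edge case.
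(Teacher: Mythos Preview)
Your approach is exactly the paper's: take the same threshold $t$ on the $R'$ curve, bound each coordinate by $d_K(R,R')$ via the definition, and apply Pythagoras. The paper's proof is in fact terser than yours and does not even address the ``restricted'' edge case you worry about---it simply lets $q$ be the point on the ROC curve of $R'$ at threshold $t$---so your diagonal-crossing fix (which, incidentally, would not land on the ROC curve of $R'$ as you claim) is unnecessary for matching the paper.
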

\begin{proof}
Assume the point $p$ is achieved by thresholding $R$ at $t\in[0,1].$ Let $q$ be
the point on the ROC curve achieved by thresholding $R'$ at the same threshold
$t'.$ After applying the definition to bound the distance in each coordinate,
the claim follows from Pythagoras' theorem.
\end{proof}
We can now show that an equalized odds predictor derived from a nearly optimal
regressor is still nearly optimal among all equal odds predictors, while
quantifying the loss in terms of the conditional Kolmogorov distance.
\begin{theorem}[Near optimality]
\label{cor:near-optimality}
\label{thm:near-optimality}
Assume that~$\ell$ is a bounded loss function, and let $\Rhat\in[0,1]$ be an
arbitrary random variable. Then, there is an optimal equalized odds
predictor~$Y^*$ and an equalized odds predictor $\Yhat$ derived from~$(\Rhat, A)$
such that
\[
\E\ell(\Yhat, Y)\le \E\ell(Y^*,Y) + 2\sqrt{2}\cdot d_{\mathrm{K}}(\Rhat,R^*)\,,
\]
where $R^*$ is the Bayes optimal regressor.  The same claim is true for equal
opportunity.
\end{theorem}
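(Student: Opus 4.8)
The plan is to push everything onto the $A$-conditional ROC curves, where Lemma~\ref{lem:pythagoras} applies. First I would invoke Corollary~\ref{cor:bayes-opt-fair} to get an optimal equalized odds predictor $Y^*$ that is \emph{derived} from the Bayes optimal regressor $R^* $ and $A$; by Lemma~\ref{lem:gamma} it has a common profile $\gamma\defeq\gamma_0(Y^*)=\gamma_1(Y^*)$, and since $Y^*$ is optimal we may take this profile on the upper-left boundary of the equalized odds feasible region, so $\gamma\in D_0(R^*)\cap D_1(R^*)$ and, w.l.o.g., $Y^*$ is realized within each group as a mixture of at most two thresholds of $R^*$. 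The target is then to produce a single point $\gamma^{**}\in D_0(\Rhat)\cap D_1(\Rhat)$ close to $\gamma$: realizing $\gamma^{**}$ as a per-group mixture of at most two thresholds of $\Rhat$, exactly as in Section~\ref{sec:opt-thresholds}, gives an equalized odds predictor $\Yhat$ derived from $(\Rhat,A)$, and an affine calculation turns $\|\gamma^{**}-\gamma\|$ into the loss bound.

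For the proximity step: for each $a$, $\gamma$ is a convex combination of at most two points of the $a$-restricted ROC curve of $R^*$ (the thresholds $Y^*$ mixes within group $a$). By Lemma~\ref{lem:pythagoras} each of those points is within $\ell_2$-distance $\sqrt2\,d_{\mathrm K}(\Rhat,R^*)$ of a point on the $a$-restricted ROC curve of $\Rhat$; taking the same convex combination of those and using convexity of $D_a(\Rhat)$ yields $\beta_a\in D_a(\Rhat)$ with $\|\beta_a-\gamma\|_2\le\sqrt2\,d_{\mathrm K}(\Rhat,R^*)$ — equivalently, $\beta_a$ is the profile obtained by applying $Y^*$'s group-$a$ randomized threshold to $\Rhat$ in place of $R^*$. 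The main obstacle is that $\beta_0$ and $\beta_1$ need not agree, so neither by itself is an equalized odds predictor, and closeness to $D_0(\Rhat)$ and to $D_1(\Rhat)$ does not by itself place a point close to the intersection. I would get past this using the geometry of the $D_a$: each $D_a(\Rhat)$ lies above the main diagonal and below a non-decreasing concave curve from $(0,0)$ to $(1,1)$, so if a point is in $D_a(\Rhat)$ then so is any point obtained from it by raising the false positive rate and/or lowering the true positive rate while staying weakly above the diagonal. Hence the point $\gamma^{**}$ with false positive rate $\max$ of the two and true positive rate $\min$ of the two lies in $D_0(\Rhat)\cap D_1(\Rhat)$ whenever it is above the diagonal, and each coordinate of $\gamma^{**}$ is within $\sqrt2\,d_{\mathrm K}(\Rhat,R^*)$ of that of $\gamma$. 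The only case this misses is when $\gamma$ is within $2\sqrt2\,d_{\mathrm K}(\Rhat,R^*)$ of the diagonal; there I would instead take $\Yhat$ to be the constant (``random guessing'') predictor with acceptance probability $(\gamma_0(Y^*)_1+\gamma_0(Y^*)_2)/2$, which trivially satisfies equalized odds, is derived from $(\Rhat,A)$, and moves each coordinate of the profile by at most half the diagonal gap, i.e.\ by less than $\sqrt2\,d_{\mathrm K}(\Rhat,R^*)$.

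Finally I would convert back to loss. For any equalized odds predictor with common profile $(u,v)$, the expected loss equals the affine expression $\Pr\{Y=0\}\bigl[u\,\ell(1,0)+(1-u)\ell(0,0)\bigr]+\Pr\{Y=1\}\bigl[v\,\ell(1,1)+(1-v)\ell(0,1)\bigr]$. Subtracting this expression for $\Yhat$ and for $Y^*$, using $|\ell|\le1$ (so the coefficients of $u$ and $v$ have absolute value at most $2\Pr\{Y=0\}$ and $2\Pr\{Y=1\}$ respectively) and the bound $\sqrt2\,d_{\mathrm K}(\Rhat,R^*)$ on each coordinate difference, gives a loss gap of at most $2\bigl(\Pr\{Y=0\}+\Pr\{Y=1\}\bigr)\sqrt2\,d_{\mathrm K}(\Rhat,R^*)=2\sqrt2\,d_{\mathrm K}(\Rhat,R^*)$. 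The equal opportunity case is the same argument but easier: only the true positive rates must match across groups, so after forming $\beta_0,\beta_1$ one lowers both to their common minimum within each $D_a(\Rhat)$ (which only decreases false positive rates), obtaining two deterministic thresholds with no randomization and the identical final bound.
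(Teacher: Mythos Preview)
Your plan is correct and follows the same skeleton as the paper's proof: obtain $Y^*$ from Corollary~\ref{cor:bayes-opt-fair}, use Lemma~\ref{lem:pythagoras} to land a point $\beta_a$ close to $\gamma$ inside each $D_a(\Rhat)$, then combine these into a single point of $D_0(\Rhat)\cap D_1(\Rhat)$ and read off the loss gap from the affine dependence of $\E\ell$ on the profile.

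Where you diverge from the paper is in the combination step and the final bookkeeping. The paper takes $q$ to be the intersection of the segment from $(0,0)$ to $q_1$ (contained in $\widehat D_1$) with the segment from $q_0$ to $(1,1)$ (contained in $\widehat D_0$), asserts $\|p^*-q\|_2^2\le\|p^*-q_0\|_2^2+\|p^*-q_1\|_2^2$, and finishes with Cauchy--Schwarz against a loss vector $v$ with $\|v\|_2\le\sqrt2$. You instead take $\gamma^{**}=(\max_a\beta_{a,1},\min_a\beta_{a,2})$ and invoke the ``down-right'' closure of each $D_a$ (which is exactly the convexity of the triangle $(0,0),\beta_a,(1,1)\subset D_a$), handle the near-diagonal case separately with a constant predictor, and bound the loss coordinate-wise rather than via Cauchy--Schwarz. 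Your route is a bit more explicit---the boundary case is spelled out and the down-right closure argument is transparent---whereas the paper's segment-intersection picture is more compact but leaves the key distance inequality and the existence of the intersection point unargued. Both rely on the same structural facts ($(0,0),(1,1)\in D_a$ and convexity), and both deliver the same constant $2\sqrt2$.
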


\begin{proof}[Proof of Theorem~\ref{thm:near-optimality}]
We prove the claim for equalized odds. The case of equal opportunity is
analogous.

Fix the loss function~$\ell$ and the regressor~$\Rhat.$ Take $Y^*$ to be the
predictor derived from the Bayes optimal regressor~$R^*$ and~$A.$ By
Corollary~\ref{cor:bayes-opt-fair}, we know that this is an optimal equalized
odds predictor as required by the lemma.
It remains to construct a derived equalized odds predictor~$\Yhat$ and relate
its loss to that of $Y^*.$

Recall the optimization problem for defining the optimal derived equalized odds predictor.
Let $\widehat D_a$ be the constraint region defined by $\Rhat.$ Likewise,
let $D_a^*$ be
the constraint region under $R^*.$ The optimal classifier~$Y^*$ corresponds to a
point $p^*\in D_0^*\cap D_1^*.$ As a consequence of Lemma~\ref{lem:pythagoras},
we can find (not necessarily identical) points
$q_0\in \widehat D_0$ and $q_1\in \widehat D_1$ such that for all $a\in\{0,1\},$
\[
\|p^*-q_a\|_2\le \sqrt{2}\cdot d_{\mathrm{K}}(\Rhat, R^*)\,.
\]
We claim that this means we can also find a feasible
point $q\in \widehat D_0\cap \widehat D_1$ such that
\[
\|p^*-q\|_2\le 2\cdot d_{\mathrm{K}}(\Rhat, R^*)\,.
\]
To see this, assume without loss of generality that the first coordinate of
$q_1$ is greater than the first coordinate of $q_0,$ and that all points
$p^*,q_0,q_1$ lie
above the main diagonal. By definition of $\widehat D_1,$ we know that the
entire line segment~$L_1$ from $(0, 0)$ to $q_1$ is contained in $\widehat
D_1.$ Similarly, the entire line segment~$L_0$ between $q_0$ and $(1,1)$ is
contained in $\widehat D_0.$ Now, take $q\in L_0\cap L_1.$ By construction,
$q\in \widehat D_0\cap\widehat D_1$ defines a classifier $\Yhat$ derived from
$\Rhat$ and~$A.$ Moreover,
\[
\|p^*-q\|_2^2 \le \|p^*-q_0\|_2^2 + \|p^*-q_0\|_2^2
\le 4\cdot d_{\mathrm{K}}(\Rhat, R^*)^2\,.
\]
Finally, by assumption on the loss function,
there is a vector $v$ with $\|v\|_2\le\sqrt{2}$ such that $\E\ell(\Yhat,
Y)=\langle v, q\rangle$ and $\E\ell(Y^*,Y)=\langle v, p^*\rangle.$ Applying
Cauchy-Schwarz,
\[
\E\ell(\Yhat,Y)-\E\ell(Y^*,Y)
= \langle v, q-p^*\rangle\le \|v\|_2\cdot\|q-p^*\|_2
\le 2\sqrt{2}\cdot d_{\mathrm{K}}(\Rhat, R^*)\,.
\]
This completes the proof.
\end{proof}

\section{Oblivious identifiability of discrimination}
\label{sec:oblivious}

Before turning to analyzing data, we pause to consider to what extent
``black box'' oblivious tests like ours can identify discriminatory
predictions. To shed light on this issue, we introduce two possible scenarios
for the dependency structure of the score, the target and the protected
attribute. We will argue that while these two scenarios can have fundamentally
different interpretations from the point of view of fairness, they can be
indistinguishable from their joint distribution. In particular, no oblivious
test can resolve which of the two scenarios applies.

\paragraph{Scenario I}

\begin{wrapfigure}{r}{0.25\textwidth}
\begin{center}
  \begin{tikzpicture}[->,auto, semithick, scale=1,every node/.style={scale=1}]
    \tikzset{var/.style={draw=black,circle,minimum size=0.7cm, inner sep=0pt}};
    \node[var] (Y) {$Y$};
    \node[var, left of=Y] (A) {$A$};
    \node[var, below of=A] (X1) {$X_1$};
    \node[var, right of=Y] (X2) {$X_2$};
    \node[var, below of=X2] (R2) {$\tilde R$};
    \node[var] at ($ (Y) - (0, 1) $) (R1) {$R^*$};
    {
      \path (A) edge (Y)
            edge (X1)
	(X1) edge (R1)
        (Y) edge (X2)
        (X2) edge (R1)
        (X2) edge (R2)
        ;
    }
    \end{tikzpicture}
\end{center}
\caption{Graphical model for Scenario I.}
\label{fig:scenario1}
\end{wrapfigure}
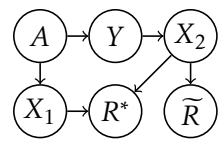

Consider the dependency structure depicted in
Figure~\ref{fig:scenario1}.  Here, $X_1$ is a feature highly (even
deterministically) correlated with the protected attribute $A$, but
independent of the target $Y$ given $A$.  For example, $X_1$ might be
``languages spoken at home'' or ``great great grandfather's
profession''. The target $Y$ has a statistical correlation with the
protected attribute. There's a second real-valued feature $X_2$
correlated with $Y$, but only related to $A$ through $Y$. For example, $X_2$
might capture an applicant's driving record if applying for insurance,
financial activity if applying for a loan, or criminal history in
criminal justice situations. An intuitively ``fair'' predictor here is
to use only the feature $X_2$ through the score $\tilde{R} = X_2$.
The score $\tilde{R}$ satisfies equalized odds, since $X_2$ and $A$ are
independent conditional on~$Y$. Because of the statistical
correlation between $A$ and $Y$, a better statistical predictor, with
greater power, can be obtained by taking into account also the
protected attribute $A$, or perhaps its surrogate $X_1$. The
statistically optimal predictor would have the form
$R^*=r^*_{I}(X_2, X_1)$, biasing the score according to the protected
attribute $A$. The score $R^*$ does {\em not} satisfy equalized odds,
and in a sense seems to be ``profiling'' based on~$A.$

\paragraph{Scenario II}

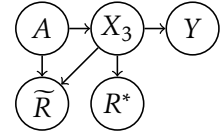
\begin{wrapfigure}{r}{0.25\textwidth}
\begin{center}
  \begin{tikzpicture}[->,auto, semithick, scale=1,every node/.style={scale=1}]
    \tikzset{var/.style={draw=black,circle,minimum size=0.7cm, inner sep=0pt}};
    \node[var] (sX) {$X_3$};
    \node[var, left of=sX] (sA) {$A$};
    \node[var, right of=sX] (sY) {$Y$};
    \node[var, below of=sA] (sR2) {$\tilde R$};
    \node[var, below of=sX] (sR1) {$R^*$};
    {
      \path
        (sA) edge (sX)
        (sX) edge (sY)
        (sX) edge (sR1)
        (sX) edge (sR2)
        (sA) edge (sR2)
        ;
      }
    \end{tikzpicture}
\end{center}
\caption{Graphical model for Scenario II.}
\label{fig:scenario2}
\end{wrapfigure}

Now consider the dependency structure depicted in Figure~\ref{fig:scenario2}. 
Here $X_3$ is a feature, e.g. ``wealth'' or
``annual income'', correlated with the protected attribute $A$ and
directly predictive of the target $Y$.  That is, in this model, the
probability of paying back of a loan is just a function of an
individual's wealth, independent of their race.  Using $X_3$ on its
own as a predictor, e.g.~using the score $R^*=X_3$, does not naturally
seem directly discriminatory.  However, as can be seen from the
dependency structure, this score does {\em not} satisfy equalized
odds. We can correct it to satisfy equalized odds and consider the
optimal non-discriminating predictor $\tilde{R}=\tilde{r}_{II}(X_3,A)$
that does satisfy equalized odds.  If $A$ and $X_3$, and thus $A$ and
$Y$, are positively correlated, then $\tilde{R}$ would depend
inversely on $A$ (see numerical construction below), introducing a
form of ``corrective discrimination'', so as to make $\tilde{R}$ is
independent of $A$ given $Y$ (as is required by equalized odds).

\subsection{Unidentifiability}

The above two scenarios seem rather different.  The optimal score
$R^*$ is in one case based directly on $A$ or its surrogate, and in
another only on a directly predictive feature, but this is not
apparent by considering the equalized odds criterion, suggesting a
possible shortcoming of equalized odds. In fact, as we will now
see, the two scenarios are {\em indistinguishable} using any oblivious
test. That is, no test based only on the target labels, the protected
attribute and the score would give different indications for the
optimal score $R^*$ in the two scenarios. If it were judged unfair in one
scenario, it would also be judged unfair in the other.

We will show this by constructing specific instantiations of the two
scenarios where the joint distributions over $(Y,A,R^*,\tilde{R})$ are
identical. The scenarios are thus unidentifiable based only on these
joint distributions.

We will consider binary targets and protected attributes taking values
in $A,Y\in\{-1,1\}$ and real valued features. We deviate from our convention of
$\{0,1\}$-values only to simplify the resulting expressions.
In Scenario I, let:
\begin{itemize}
\item $\Pr\Set{A=1}=1/2$, and $X_1=A$
\item $Y$ follows a logistic model parametrized based on $A$:
  $\Pr\Set{Y=y\mid A=a} = \frac1{1+\exp(-2ay)},$
\item $X_2$ is Gaussian with mean $Y$: $X_2=Y+{\mathcal N}(0,1)$
\item Optimal unconstrained and equalized odds scores are given by: $R^*=X_1 + X_2=A + X_2,$ and $\tilde{R}=X_2$
\end{itemize}
In Scenario II, let:
\begin{itemize}
\item $\Pr\Set{A=1}=1/2$.
\item $X_3$ conditional on $A=a$ is a mixture of two Gaussians:
  ${\mathcal N}(a+1, 1)$ with weight $\frac1{1+\exp(-2a)}$ and
  ${\mathcal N}(a-1, 1)$ with weight $\frac1{1+\exp(2a)}.$
\item $Y$ follows a logistic model parametrized based on $X_3$: $\Pr\Set{Y=y\mid X_3=x_3}= \frac1{1+\exp(-2yx_3)}.$
\item Optimal unconstrained and equalized odds scores are given by: $R^* = X_3,$  and $\tilde{R}= X_3-A$
\end{itemize}
The following proposition establishes the equivalence
between the scenarios and the optimality of the scores (proof at end of section):
\begin{proposition}\label{prop:unident}
  The joint distributions of $(Y,A,R^*,\tilde{R})$ are identical in
  the above two scenarios.  Moreover, $R^*$ and $\tilde{R}$ are
  optimal unconstrained and equalized odds scores respectively, in
  that their ROC curves are optimal and for any loss function an
  optimal (unconstrained or equalized odds) classifier can be derived
  from them by thresholding.
\end{proposition}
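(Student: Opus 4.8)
The plan is to verify the claim in two parts: first, that the joint distribution of $(Y,A,R^*,\tilde R)$ coincides across the two scenarios; second, that $R^*$ and $\tilde R$ are optimal in the stated sense within each scenario. For the first part, I would fix $A=a$ and compute the conditional law of $(Y,R^*,\tilde R)$ given $A=a$ in each scenario, then compare; since $\Pr\{A=1\}=1/2$ in both, matching the conditionals suffices. In Scenario~I, conditioned on $A=a$, we have $\tilde R = X_2 = Y + \mathcal N(0,1)$ and $R^* = a + X_2 = \tilde R + a$, while $Y$ is logistic in $a$ via $\Pr\{Y=y\mid A=a\}=1/(1+\exp(-2ay))$. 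In Scenario~II, conditioned on $A=a$, $X_3$ is the prescribed two-component Gaussian mixture, $Y$ is logistic in $X_3$, $R^*=X_3$, and $\tilde R = X_3 - a = R^* - a$. So in both scenarios $R^* - \tilde R = a$ is deterministic given $A$, and it remains only to check that the joint law of $(Y,\tilde R)\mid A=a$ agrees — equivalently the law of $(Y, X_2)\mid A=a$ in Scenario~I equals the law of $(Y, X_3 - a)\mid A=a$ in Scenario~II.

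The crux is therefore the following density identity: conditioned on $A=a$, the pair $(Y,X_2)$ in Scenario~I and the pair $(Y, X_3-a)$ in Scenario~II have the same joint density. I would argue this by writing each joint density as (marginal of the real-valued coordinate) $\times$ (conditional of $Y$), or the reverse factorization, whichever is cleaner. One clean route: in Scenario~I, $X_2\mid (A=a)$ is a mixture of $\mathcal N(1,1)$ with weight $\Pr\{Y=1\mid A=a\}=1/(1+\exp(-2a))$ and $\mathcal N(-1,1)$ with weight $1/(1+\exp(2a))$; and in Scenario~II, $X_3 - a\mid(A=a)$ is by construction a mixture of $\mathcal N(1,1)$ with weight $1/(1+\exp(-2a))$ and $\mathcal N(-1,1)$ with weight $1/(1+\exp(2a))$ — these match. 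Then I must check the conditional laws of $Y$ given the real coordinate also match. In Scenario~II, $\Pr\{Y=y\mid X_3=x_3\} = 1/(1+\exp(-2yx_3))$; with $x_3 = x_2 + a$ this is $1/(1+\exp(-2y(x_2+a)))$. In Scenario~I, $\Pr\{Y=y\mid X_2=x_2, A=a\}$ is obtained by Bayes' rule from the $\mathcal N(y,1)$ density of $X_2$ and the prior $1/(1+\exp(-2ay))$; the standard logistic-Gaussian computation gives a term $\exp(-(x_2-y)^2/2)$ whose cross term $\exp(x_2 y)$ combines with the prior's $\exp(ay)$ to yield exactly $1/(1+\exp(-2y(x_2+a)))$ after the Gaussian normalizers cancel. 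So the two conditionals agree, completing the distributional equivalence. The main obstacle is just keeping this Bayes/completing-the-square bookkeeping straight so that the $a$-dependence lands in the right place; it is routine but must be done carefully.

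For the optimality claims, I would invoke the structure already set up: a score $S(X,A)$ whose ROC curve dominates all others, and from which a Bayes optimal classifier for any proper loss is obtained by thresholding, is exactly a monotone transformation of the conditional mean $\E[Y\mid X,A]$ (restricted, for the equalized odds claim, to the class of equalized-odds scores). For $R^*$ unconstrained: in Scenario~II, $R^*=X_3$ and $\Pr\{Y=1\mid X_3=x_3\}=1/(1+\exp(-2x_3))$ is strictly increasing in $x_3$, so $R^*$ is a monotone function of the Bayes optimal regressor and hence optimal; in Scenario~I this follows either directly or, more cheaply, from the distributional equivalence just established together with the fact that optimality of a score is an oblivious property (it depends only on the joint law of $(Y,A,R^*)$). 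For $\tilde R$ satisfying equalized odds: $\tilde R = X_2$ in Scenario~I is independent of $A$ given $Y$ since $X_2 = Y + \mathcal N(0,1)$ with noise independent of $A$; equivalently $\tilde R = X_3 - A$ in Scenario~II, and one checks $X_3 - A \mid Y$ does not depend on $A$ using the mixture weights. To see $\tilde R$ is the \emph{optimal} equalized odds score, I would note that among equalized-odds scores the relevant quantity is the $A$-conditional ROC curve, which by Lemma~\ref{lem:pythagoras}-style reasoning (or directly) is governed by how well $\tilde R$ separates $Y$; since $\tilde R = Y + \mathcal N(0,1)$ with noise independent of everything, its likelihood ratio $\Pr\{\tilde R \in \cdot \mid Y=1\}/\Pr\{\tilde R\in\cdot\mid Y=1\}$... is monotone, so thresholding $\tilde R$ traces out the Neyman–Pearson-optimal ROC curve for distinguishing $Y=0$ from $Y=1$ under \emph{any} equalized-odds score built from these features. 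Again the cleanest packaging is: optimality is oblivious, so it transfers across the two scenarios, and in whichever scenario the computation is easiest (Scenario~I for $\tilde R$, Scenario~II for $R^*$) it is immediate from monotonicity of a logistic link. I expect no real obstacle here beyond stating precisely what "optimal ROC curve" means and citing the Neyman–Pearson / proper-loss threshold fact already referenced via \cite{Wasserman10}.
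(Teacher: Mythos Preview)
Your distributional-equivalence argument is correct and close in spirit to the paper's, though organized differently. You condition on $A=a$, match the marginal of $\tilde R$ (both scenarios give the same two-component Gaussian mixture), and then match $\Pr\{Y=y\mid \tilde R, A\}$ via the Bayes/logistic computation. The paper instead matches the conditional $Y\mid R^*$, then the marginal $R^*\mid A$, and closes the argument by invoking the conditional independence $A\perp Y\mid R^*$ (which holds because $R^*$ is Bayes optimal in Scenario~I). Both factorizations work; yours is more hands-on, theirs leans on the graphical structure.

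There is one genuine gap in your optimality argument for $\tilde R$. Your Neyman--Pearson step shows that thresholding $\tilde R=X_2$ traces the best possible ROC curve \emph{among scores that are functions of $X_2$ alone}. But the proposition asserts optimality among \emph{all} equalized-odds scores built from the available features $(X_1,X_2)$ (and $A$), and you never argue that this larger class collapses to functions of $X_2$. The paper supplies exactly this step: since $X_1=A$ and $X_2\perp A\mid Y$, any (possibly randomized) $f(X_1,X_2)$ that is independent of $A$ given $Y$ cannot depend on $X_1$, for any such dependence would create a conditional dependence on $A=X_1$. Without this, a reader has no reason to rule out an equalized-odds score using both features that beats $\tilde R$. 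Relatedly, your appeal to ``optimality of a score is an oblivious property'' to transfer $R^*$-optimality between scenarios is not quite right: the ROC curve of a fixed score is oblivious, but whether that curve is \emph{best achievable} depends on the feature space, which differs across the scenarios. You do not actually need this transfer, since your own Bayes computation already yields $\Pr\{Y=y\mid X_2,A\}=1/(1+\exp(-2y(X_2+A)))$, which directly shows $R^*=X_2+A$ is (a monotone function of) the Bayes regressor in Scenario~I; just cite that instead.
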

\begin{figure}
\centering
  \begin{tikzpicture}[->,auto, semithick, scale=1,every node/.style={scale=1,,node distance = 14mm}]
    \tikzset{var/.style={draw=black,circle,minimum size=0.7cm, inner sep=0pt}};
    \node[var] (sX3) {$X_3$};
    \node[var, left of=sX3] (sA) {$A$};
    \node[var, right of=sX3] (sY) {$Y$};
    \node[var, below of=sA] (sX1) {$X_1$};
    \node[var, below of=sX3] (sX2) {$X_2$};
    {
      \path
        (sA) edge (sX3)
        (sX3) edge (sY)
        (sA) edge (sX1)
        (sY) edge(sX2)
        (sX1) edge(sX2)
        ;
      }
    
    \end{tikzpicture}
  \qquad \qquad  \qquad
  \begin{tikzpicture}[->,auto, semithick, scale=1,every node/.style={scale=1,node distance = 14mm}]
    \tikzset{var/.style={draw=black,circle,minimum size=0.7cm, inner sep=0pt}};
    \node[var] (sA) {$A$};
    \node[var, left of=sA] (sX1) {$X_1$};
    \node[var, right of=sA] (sY) {$Y$};
    \node[var, right of=sY] (sX2) {$X_2$};
    \node[var, below of=sX1] (sX3) {$X_3$};
    {
       \path
        (sA) edge (sX1)
        (sA) edge (sY)
        (sY) edge (sX2)
        (sX2) edge(sX3)
        (sX1) edge(sX3)
        ;
      }
    \end{tikzpicture}

    \caption{Two possible directed dependency structures for the
      variables in scenarios I and II.  The undirected (infrastructure
      graph) versions of both graphs are also possible.}\label{fig:twooptions}
\end{figure}
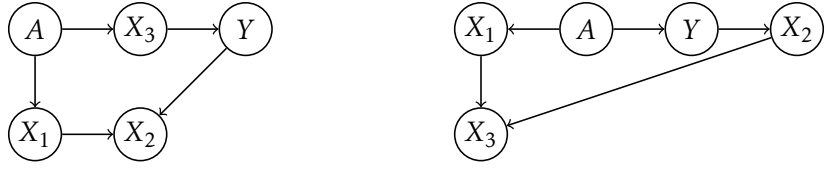
Not only can an oblivious test (based only on $(Y,A,R)$) not
distinguish between the two scenarios, but even having access to the
features is not of much help.  Suppose we have access to all three
feature, i.e.~to a joint distribution over $(Y,A,X_1,X_2,X_3)$---since
the distributions over $(Y,A,R^*,\tilde{R})$ agree, we can construct
such a joint distribution with $X_2=\tilde{R}$ and $X_3=\tilde{R}$.
The features are correlated with each other, with $X_3=X_1+X_2$.
Without attaching meaning to the features or making causal assumptions
about them, we do not gain any further insight on the two scores.  In
particular, both causal structures depicted in Figure~\ref{fig:twooptions} are
possible.

\subsection{Comparison of different oblivious measures}
It is interesting to consider how different oblivious measures apply
to the scores $\tilde{R}$ and $R^*$ in these two scenarios.

As discussed in Section \ref{sec:opt-thresholds}, a score satisfies
equalized odds iff the conditional ROC curves agree for both values of
$A$, which we refer to as having \emph{identical} ROC curves.
\begin{definition}[Identical ROC Curves]
We say that a score $R$ has \emph{identical conditional ROC curves} if
$C_a(t)=C_{a'}(t)$
for all groups of $a,a'$ and all $t\in\mathbb{R}$.
\end{definition}
In particular, this property is achieved by an equalized odds 
score~$\tilde{R}$.
Within each protected group, i.e.~for each value $A=a$, the score
$R^*$ differs from $\tilde{R}$ by a fixed monotone transformation,
namely an additive shift $R^*=\tilde{R}+A$.  Consider a derived
threshold predictor $\hat{Y}(\tilde{R})=\ind\set{\tilde{R}>t}$ based
on $\tilde{R}$.  Any such predictor obeys equalized odds.  We can also
derive the same predictor deterministically from $R^*$ and $A$ as
$\hat{Y}(R^*,A)=\ind\set{R^*>t_A}$ where $t_A=t-A$.  That is, in our
particular example, $R^*$ is special in that optimal equalized odds
predictors can be derived from it (and the protected attribute $A$)
deterministically, without the need to introduce randomness as in
Section \ref{sec:opt-thresholds}.  In terms of the $A$-conditional ROC
curves, this happens because the images of the conditional ROC curves
$C_0$ and $C_1$ overlap, making it possible to choose points in the
true/false-positive rate plane that are on both ROC curves.  However,
the same point on the conditional ROC curves correspond to different
thresholds!  Instead of $C_0(t)=C_1(t)$, for $R^*$ we have
$C_0(t)=C_1(t-1)$.  We refer to this property as ``matching''
conditional ROC curves:
\begin{definition}[Matching ROC curves]
We say that a score $R$ has \emph{matching conditional ROC curves} if
the images of all $A$-conditional ROC curves are the same, i.e.,
for all groups $a,a',$
$\set{ C_a(t) \colon t\in\mathbb{R} } = \set{ C_{a'}(t) \colon
  t\in\mathbb{R} }.$
\end{definition}
Having matching conditional ROC curves corresponds to being
deterministically correctable to be non-discriminating: If a predictor
$R$ has matching conditional ROC curves, then for any loss function
the optimal equalized odds derived predictor is a deterministic
function of $R$ and $A$.  But as our examples show, having matching
ROC curves does not at all mean the score is itself
non-discriminatory: it can be biased according to $A$, and a
(deterministic) correction might be necessary in order to ensure
equalized odds.

Having identical or matching ROC curves are properties of the
conditional distribution $R|Y,A$, also referred to as ``model
errors''.  Oblivious measures can also depend on the conditional
distribution $Y|R,A$, also referred to as ``target population errors''.
In particular, one might consider the following property:
\begin{definition}[Matching frequencies]
  We say that a score $R$ has \emph{matching conditional frequencies},
  if for all groups $a,a'$ and all scores $t,$ we have
\[
\Pr\Set{Y=1\mid R=t, A=a} = \Pr\Set{Y=1\mid R=t, A=a'}\,.
\]
\end{definition}
Matching conditional frequencies state that at a given score, both
groups have the same probability of being labeled positive.  The
definition can also be phrased as requiring that the conditional
distribution $Y|R,A$ be independent of $A$.  In other words, having
matching conditional frequencies is equivalent to $A$ and $Y$ being independent
conditioned on $R$.  The corresponding dependency structure is
$Y-R-A$.  That is, the score $R$ includes all possible information the
protected attribute can provide on the target $Y$.  Indeed having
matching conditional frequencies means that the score is in a sense
``optimally dependent'' on the protected attribute $A$.  Formally, for
any loss function the optimal (unconstrained, possibly discriminatory)
derived predictor $\Yhat(R,A)$ would be a function of $R$ alone, since
$R$ already includes all relevant information about $A$.  In
particular, an unconstrained optimal score, like $R^*$ in our
constructions, would satisfy matching conditional frequencies.  Having
matching frequencies can therefore be seen as a property indicating
utilizing the protected attribute for optimal predictive power,
rather then protecting discrimination based on it.

It is also worth noting the similarity between matching frequencies
and a binary predictor $\Yhat$ having equal conditional precision,
that is $\Pr\Set{Y=1\mid \Yhat=\hat{y}, A=a} = \Pr\Set{Y=1\mid
  \Yhat=\hat{y}, A=a'}$.  Viewing $\Yhat$ as a score that takes two
possible values, the notions agree. But $R$ having matching
conditional frequencies does {\em not} imply the threshold predictors
$\Yhat(R)=\ind\set{R>t}$ will have matching precision---the
conditional distributions $R|A$ might be different, and these are
involved in marginalizing over $R>t$ and $R\leq t$.

To summarize, the properties of the scores in our scenarios are:
\begin{itemize}
\item $R^*$ is optimal based on the features and protected attribute,
  without any constraints.
\item $\tilde{R}$ is optimal among all equalized odds scores.
\item $\tilde R$ does satisfy equal odds, $R^*$ does not satisfy equal odds.
\item $\tilde R$ has identical (thus matching) ROC curves, 
  $R^*$ has matching but non-identical ROC curves.
\item $R^*$ has matching conditional frequencies, while $\tilde{R}$
  does not.
\end{itemize}

\subsubsection*{Proof of Proposition \ref{prop:unident}}

  First consider Scenario I.  The score $\tilde{R}=X_2$ obeys equalized
  odds due to the dependency structure.  More broadly, if a score
  $R=f(X_2,X_1)$ obeys equalized odds, for some randomized function
  $f$, it cannot depend on $X_1$: conditioned on $Y$, $X_2$ is
  independent of $A=X_1$, and so any dependency of $f$ on $X_1$ would
  create a statistical dependency on $A=X_1$ (still conditioned on
  $Y$) which is not allowed.  We can verify that
  $\Pr\set{Y=y\mid X_2=x_2} \propto \Pr\set{Y=y}\Pr\set{X_2=x_2\mid Y=y} \propto
  \exp(2y x_2)$ which is monotone in $X_2$, and so for any loss
  function we would just want to threshold $X_2$ and any function
  monotone in $X_2$ would make an optimal equalized odds predictor.

To obtain the optimal unconstrained score consider
\begin{align*}
\Pr\set{Y=y\mid X_1=x_1,X_2=x_2} & \propto
\Pr\set{A=x_1}\Pr\set{Y=y\mid A=x_1}\Pr\set{X_2=x_2\mid Y=y} \\
&\propto\exp(2y(x_1+x_2)).
\end{align*}
That is, optimal classification only depends on
$x_1+x_2$ and so $R^*=X_1+X_2$ is optimal.

Turning to scenario II, since $P(Y|X_3)$ is monotone in $X_3$, any
monotone function of it is optimal (unconstrained), and the dependency
structure implies its optimal even if we allow dependence on $A$.
Furthermore, the conditional distribution
$Y|X_3$ matched that of  $Y|R^*$ from scenario I since again we have
$\Pr\set{Y=y|X_3=x_3}\propto \exp(2 y x_3)$ by construction.  Since we
defined $R^*=X_3$, we have that the conditionals $R^*|Y$ match.  We
can also verify that by construction $X_3|A$ matches $R^*|A$ in
scenario I.  Since in scenario I, $R^*$ is optimal even dependent on
$A$, we have that $A$ is independent of $Y$ conditioned on $R^*$, as
in scenario II when we condition on $X_3=R^*$.  This establishes the
joint distribution over $(A,Y,R^*)$ is the same in both scenarios.
Since $\tilde{R}$ is the same deterministic function of $A$ and $R^*$
in both scenarios, we can further conclude the joint distributions
over $A,Y,R^*$ and $\tilde{R}$ are the same.  Since equalized odds is
an oblivious property, once these distributions match, if $\tilde{R}$
obeys equalized odds in scenario I, it also obeys it in scenario II.

\section{Case study: FICO scores}
\label{sec:cases}
We examine various fairness measures in the context of FICO scores
with the protected attribute of race.  FICO scores are a proprietary
classifier widely used in the United States to predict credit
worthiness.  Our FICO data is based on a sample of $301536$ TransUnion
TransRisk scores from 2003~\cite{fedreserve}.  These scores, ranging
from $300$ to $850$, try to predict credit risk; they form our
score $R$.  People were labeled as in \emph{default} if they
failed to pay a debt for at least $90$ days on at least one account in
the ensuing 18-24 month period; this gives an outcome $Y$.  Our
protected attribute $A$ is race, which is restricted to four values:
Asian, white non-Hispanic (labeled ``white'' in figures), Hispanic,
and black. FICO scores are complicated proprietary classifiers based
on features, like number of bank accounts kept, that could interact
with culture---and hence race---in unfair ways.
\begin{figure}[h!]
  \centering
  \includegraphics[width=\textwidth]{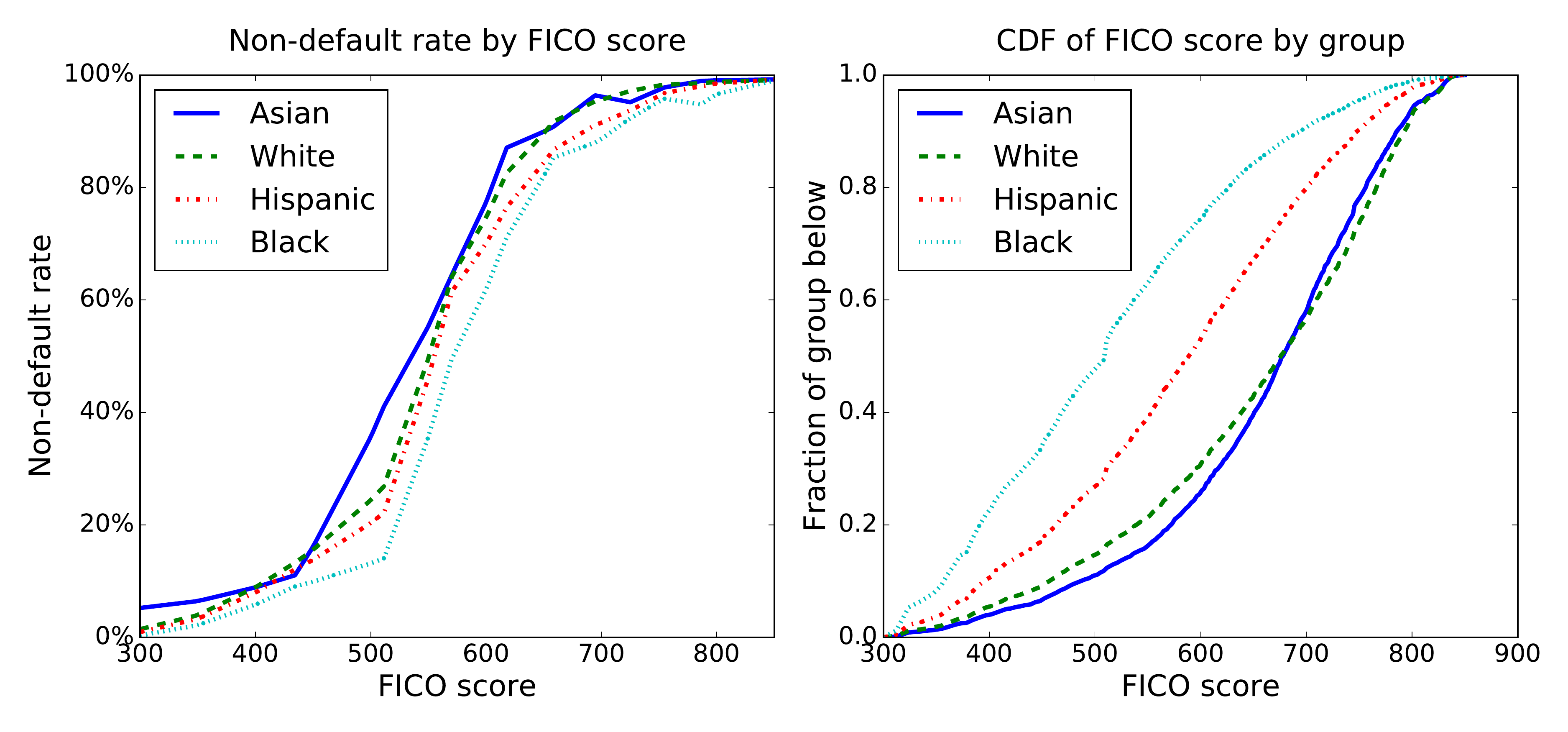}
  \caption{These two marginals, and the number of people per group,
    constitute our input data.}
  \label{fig:fico-marginals}
\end{figure}
A credit score cutoff of 620 is commonly used for prime-rate
loans\footnote{\url{http://www.creditscoring.com/pages/bar.htm}
  (Accessed: 2016-09-20)}, which corresponds to an any-account default
rate of 18\%.  Note that this measures default on \emph{any} account
TransUnion was aware of; it corresponds to a much lower ($\approx
2\%$) chance of default on individual new loans.  To illustrate the
concepts, we use any-account default as our target~$Y$---a higher
positive rate better illustrates the difference between equalized odds
and equal opportunity.

We therefore consider the behavior of a lender who makes money on
default rates below this, i.e., for whom whom false positives (giving
loans to people that default on any account) is 82/18 as expensive as
false negatives (not giving a loan to people that don't default). The
lender thus wants to construct a predictor $\Yhat$ that is optimal
with respect to this asymmetric loss.  A typical classifier will pick
a threshold per group and set $\Yhat = 1$ for people with FICO scores
above the threshold for their group.  Given the marginal distributions
for each group (Figure~\ref{fig:fico-marginals}), we can study the
optimal profit-maximizing classifier under five different constraints
on allowed predictors:
\begin{figure}
  \centering
  \includegraphics[width=0.9\textwidth]{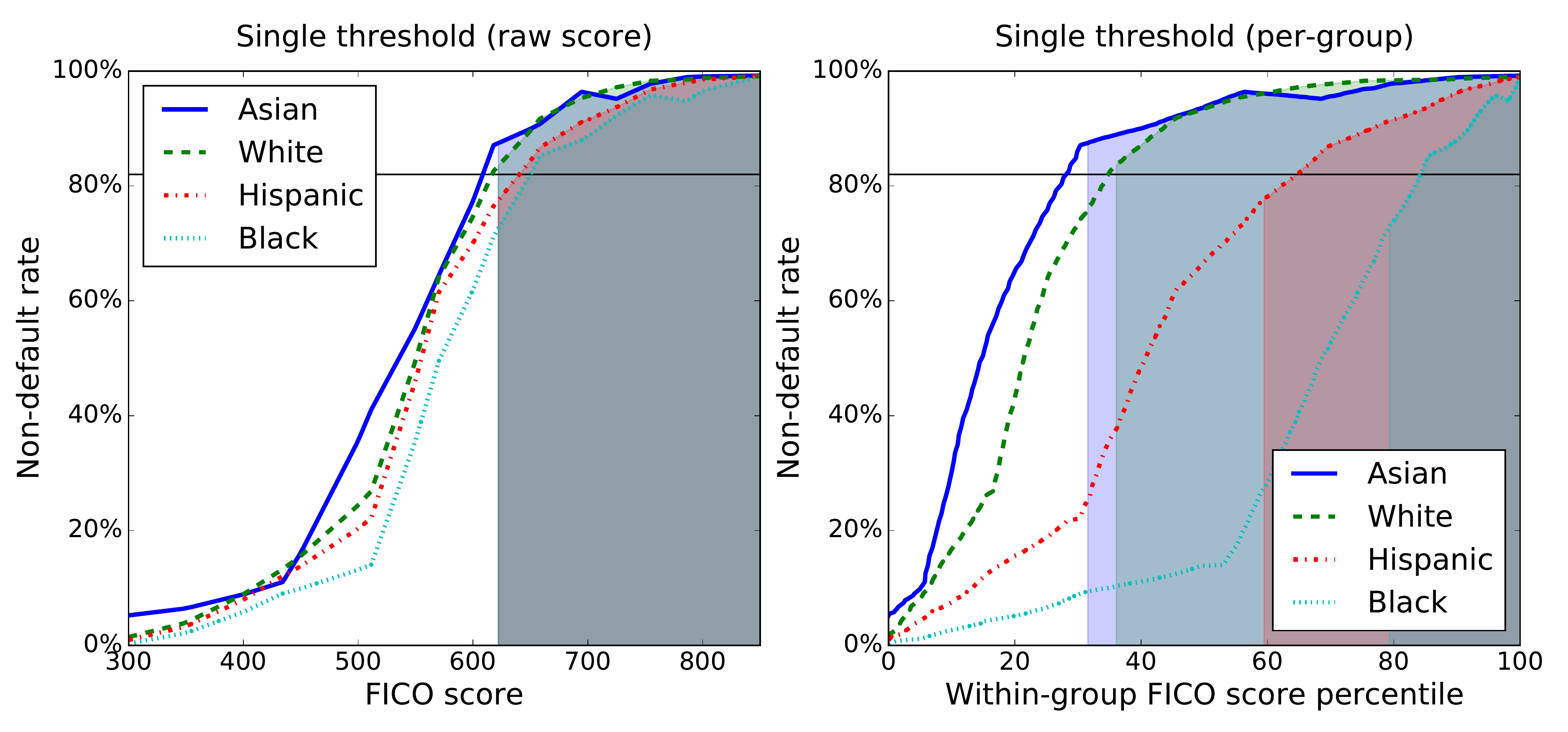}
  \caption{The common FICO threshold of 620 corresponds to a
    non-default rate of 82\%.  Rescaling the $x$ axis to represent the
    within-group thresholds (right), $\Pr[\Yhat=1\mid Y=1,A]$ is the
    fraction of the area under the curve that is shaded. This means
    black non-defaulters are much less likely to qualify for loans
    than white or Asian ones, so a race blind score threshold violates
    our fairness definitions.}
  \label{fig:fico-fixed}
\end{figure}
\begin{itemize}
\item \textbf{Max profit} has no fairness constraints, and will pick
  for each group the threshold that maximizes profit.  This is the
  score at which 82\% of people in that group do not default.
\item \textbf{Race blind} requires the threshold to be the same for
  each group.  Hence it will pick the single threshold at which 82\%
  of people do not default overall, shown in
  Figure~\ref{fig:fico-fixed}.
\item \textbf{Demographic parity} picks for each group a threshold
  such that the fraction of group members that qualify for loans is the same.
\item \textbf{Equal opportunity} picks for each group a threshold such
  that the fraction of \emph{non-defaulting} group members that qualify for loans is the
  same.
\item \textbf{Equalized odds} requires both the fraction of non-defaulters
  that qualify for loans and the fraction of defaulters that qualify
  for loans to be constant across groups.  This cannot be achieved with
  a single threshold for each group, but requires randomization.
  There are many ways to do it; here, we pick \emph{two} thresholds
  for each group, so above both thresholds people always qualify and
  between the thresholds people qualify with some probability.
\end{itemize}
We could generalize the above constraints to allow non-threshold
classifiers, but we can show that each profit-maximizing classifier
will use thresholds. As shown in Section~\ref{sec:achieving}, the
optimal thresholds can be computed efficiently; the results are shown
in Figure~\ref{fig:fico-thresholds}.
\begin{figure}[h!]
  \centering
  \includegraphics[width=\textwidth]{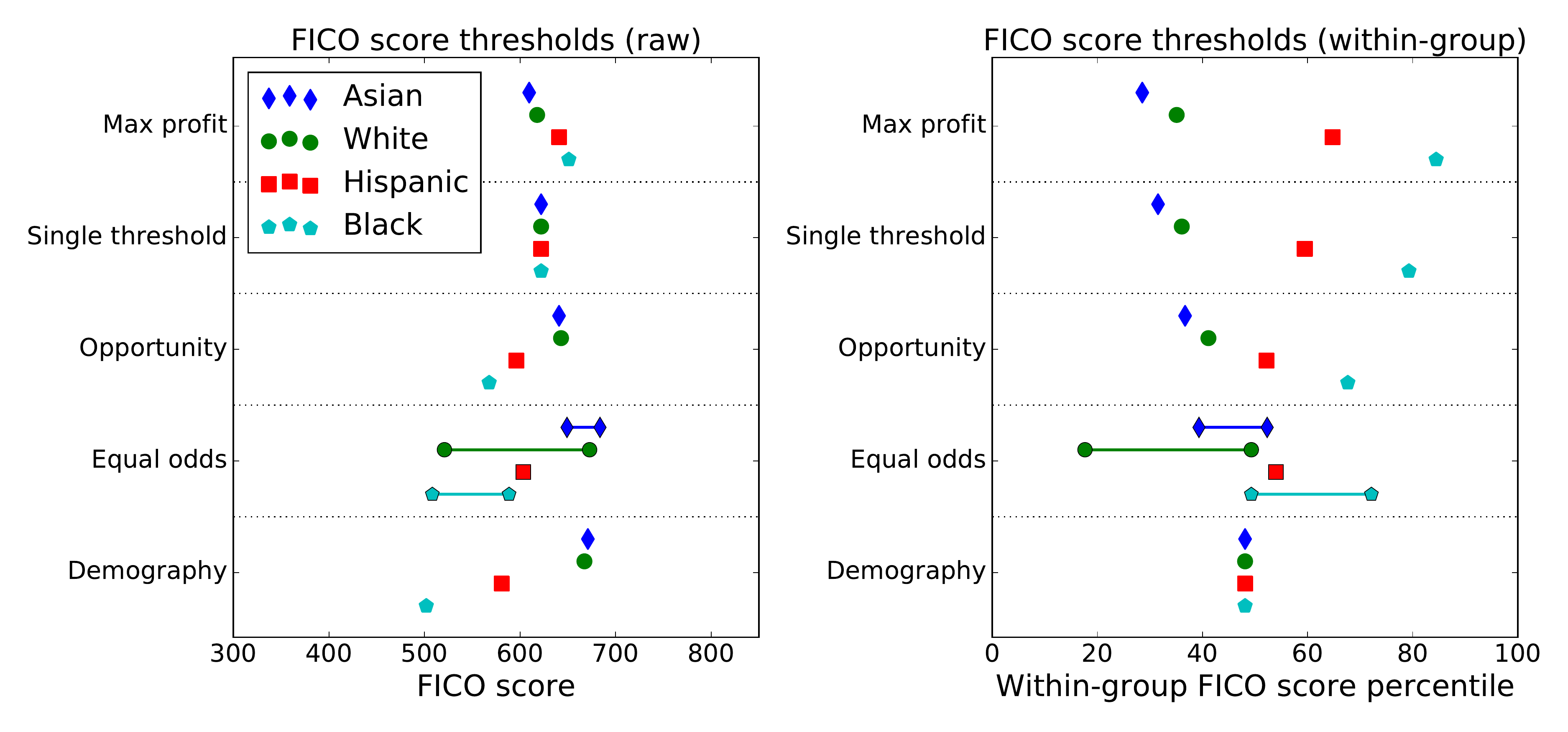}
  \caption{FICO thresholds for various definitions of fairness.
    The equal odds method does not give a single
    threshold, but instead $\Pr[\Yhat = 1 \mid R,A]$ increases over some
    not uniquely defined range; we pick the one containing the fewest people.
    Observe that, within each race, the equal opportunity threshold
    and average equal odds threshold lie between the max profit
    threshold and equal demography thresholds.
  }
  \label{fig:fico-thresholds}
\end{figure}
Our proposed fairness
definitions give thresholds between those of max-profit/race-blind
thresholds and of demographic parity.
\begin{figure}
  \centering
  \includegraphics[width=\textwidth]{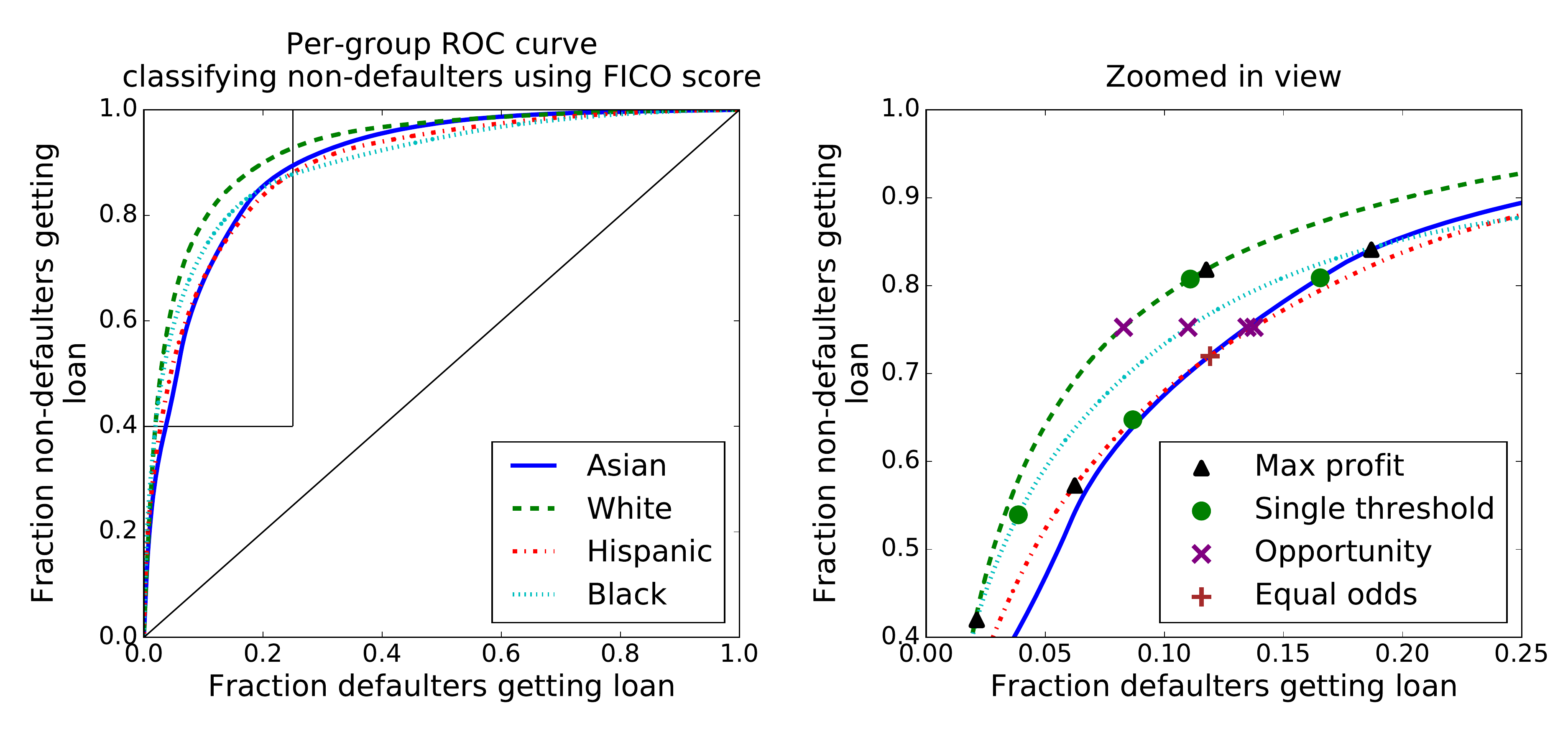}
  \caption{The ROC curve for using FICO score to identify
    non-defaulters.  Within a group, we can achieve any convex
    combination of these outcomes.  Equality of opportunity picks
    points along the same horizontal line.  Equal odds picks a point
    below all lines.}
  \label{fig:fico-roc}
\end{figure}
Figure~\ref{fig:fico-roc} plots the ROC curves for each group.  It
should be emphasized that differences in the ROC curve do not indicate
differences in default behavior but rather differences in prediction
accuracy---lower curves indicate FICO scores are less predictive for
those populations.  This demonstrates, as one should expect, that the
majority (white) group is classified more accurately than minority
groups, even over-represented minority groups like Asians.

The left side of Figure~\ref{fig:fico-profit} shows the fraction of
people that wouldn't default that would qualify for loans by the
various metrics.  Under max-profit and race-blind thresholds, we find
that black people that would not default have a significantly harder
time qualifying for loans than others.  Under demographic parity, the
situation is reversed.
\begin{figure}[h!]
  \centering
  \includegraphics[width=\textwidth]{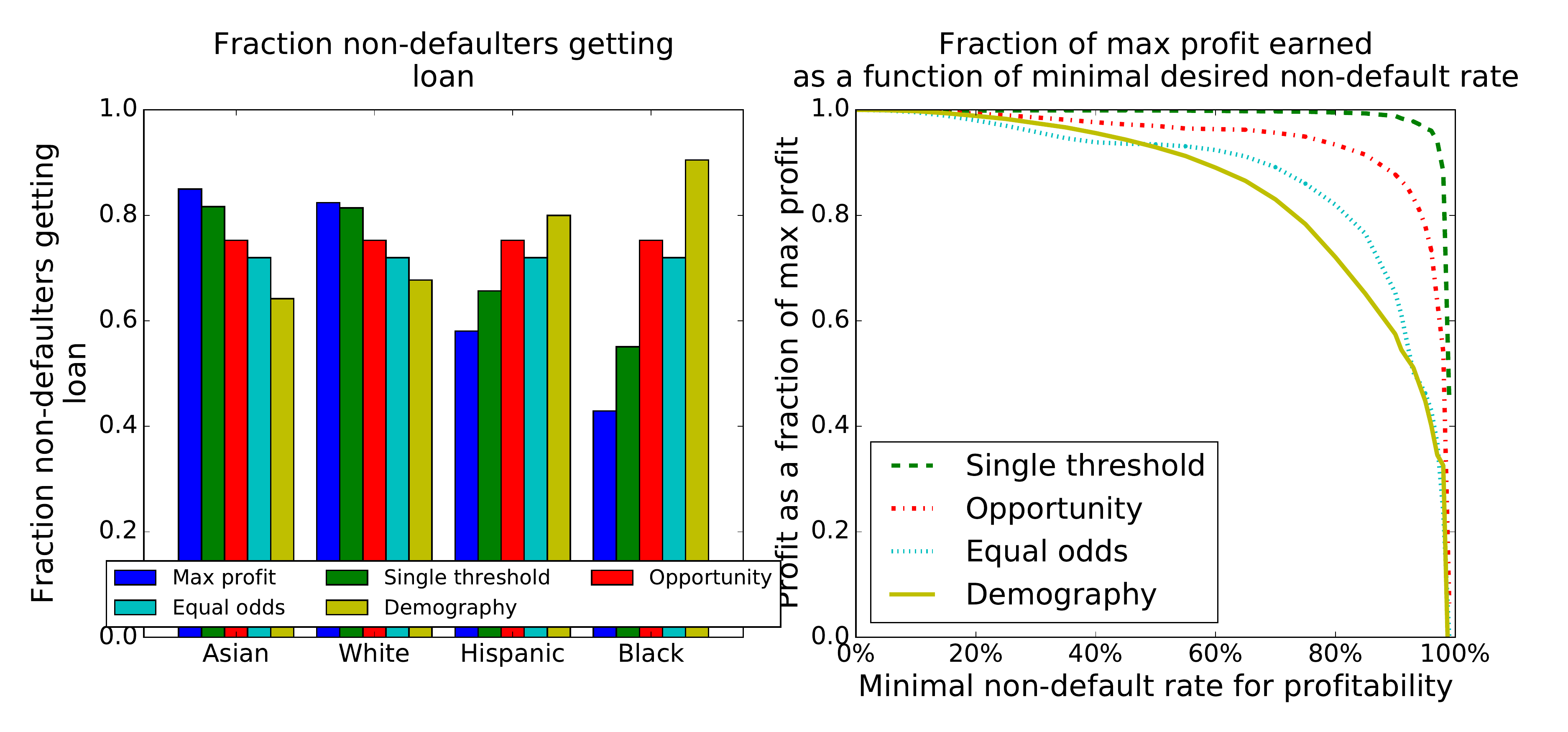}
  \caption{On the left, we see the fraction of non-defaulters that
    would get loans.  On the right, we see the profit achievable for
    each notion of fairness, as a function of the false
    positive/negative trade-off.}
  \label{fig:fico-profit}
\end{figure}

The right side of Figure~\ref{fig:fico-profit} gives the profit
achieved by each method, as a fraction of the max profit achievable.
We show this as a function of the non-default rate above which loans
are profitable (i.e. 82\% in the other figures).  At 82\%, we find
that a race blind threshold gets 99.3\% of the maximal profit, equal
opportunity gets 92.8\%, equalized odds gets 80.2\%, and demographic
parity gets 69.8\%.  So equal opportunity fairness costs less than a
quarter what demographic parity costs---and if the classifier
improves, this would reduce further.

The difference between equal odds and equal opportunity is that under
equal opportunity, the classifier can make use of its better accuracy
among whites.  Under equal odds this is viewed as unfair, since it
means that white people who wouldn't pay their loans have a harder
time getting them than minorities who wouldn't pay their loans.  An
equal odds classifier must classify everyone as poorly as the hardest
group, which is why it costs over twice as much in this case.  This
also leads to more conservative lending, so it is slightly harder for
non-defaulters of all groups to get loans.

The equal opportunity classifier does make it easier for defaulters to
get loans if they are minorities, but the incentives are aligned properly.
Under max profit, a small group may not be worth figuring out how to
classify and so be treated poorly, since the classifier can't identify
the qualified individuals.  Under equal opportunity, such
poorly-classified groups are instead treated better than
well-classified groups.  The cost is thus born by the company using
the classifier, which can decide to invest in better classification,
rather than the classified group, which cannot.  Equalized odds gives
a similar, but much stronger, incentive since the cost for a small
group is not proportional to its size.

While race blindness achieves high profit, the fairness guarantee is
quite weak.  As with max profit, small groups may be classified poorly
and so treated poorly, and the company has little incentive to improve
the accuracy.  Furthermore, when race is redundantly encoded, race
blindness degenerates into max profit.

\section{Conclusions}
We proposed a fairness measure that accomplishes two important desiderata.
First, it remedies the main conceptual shortcomings of demographic parity as a
fairness notion. Second, it is fully aligned with the central goal of supervised
machine learning, that is, to build higher accuracy classifiers.
In light of our results, we draw several conclusions aimed to help interpret and
apply our framework effectively.

\paragraph{Choose reliable target variables.}

Our notion requires access to observed outcomes such as default rates in the
loan setting. This is precisely the same requirement that supervised learning
generally has. The broad success of supervised learning demonstrates that this
requirement is met in many important applications. That said, having access
to reliable ``labeled data'' is not always possible. Moreover, the measurement
of the target variable might in itself be unreliable or biased. Domain-specific
scrutiny is required in defining and collecting a reliable target variable.

\paragraph{Measuring unfairness, rather than proving fairness.}

Due to the limitations we described, satisfying our notion (or any other
oblivious measure) should not be considered a conclusive \emph{proof of
fairness}. Similarly, violations of our condition are not meant to be a proof of
unfairness. Rather we envision our framework as providing a reasonable way of
discovering and measuring potential concerns that require further scrutiny. We
believe that resolving fairness concerns is ultimately impossible without
substantial domain-specific investigation. This realization echoes earlier
findings in ``Fairness through Awareness''~\cite{DworkHPRZ12} describing the
task-specific nature of fairness.

\paragraph{Incentives.}

Requiring equalized odds creates an incentive structure for the entity
building the predictor that aligns well with achieving fairness.
Achieving better prediction with equalized odds requires collecting
features that more directly capture the target $Y$, unrelated to its
correlation with the protected attribute. Deriving an equalized
odds predictor from a score involves considering the pointwise minimum
ROC curve among different protected groups, encouraging constructing
of predictors that are accurate in all groups, e.g.,~by collecting data
appropriately or basing prediction on features predictive in all
groups.

\paragraph{When to use our post-processing step.}
An important feature of our notion is that it can be achieved via a simple and
efficient post-processing step. In fact, this step requires only aggregate
information about the data and therefore could even be carried out in a
privacy-preserving manner (formally, via Differential Privacy). In contrast,
many other approaches require changing a usually complex machine learning
training pipeline, or require access to raw data. Despite its simplicity, our
post-processing step exhibits a strong optimality principle. If the underlying
score was close to optimal, then the derived predictor will be close to optimal
among all predictors satisfying our definition. However, this does not mean that
the predictor is necessarily good in an absolute sense. It also does not mean
that the loss compared to the original predictor is always small.  An
alternative to using our post-processing step is always to invest in better
features and more data. Only when this is no longer an option, should our
post-processing step be applied.

\paragraph{Predictive affirmative action.}
In some situations, including Scenario II in
Section~\ref{sec:oblivious}, the equalized odds predictor can be thought of as
introducing some sort of affirmative action: the optimally predictive
score~$R^*$ is shifted based on~$A$. This shift compensates for the fact that,
due to uncertainty, the score is in a sense more biased then the target label
(roughly, $R^*$ is more correlated with~$A$ then~$Y$ is correlated with~$A$).
Informally speaking, our approach transfers the \emph{burden of uncertainty}
from the protected class to the decision maker. We believe this is a reasonable
proposal, since it incentivizes the decision maker to invest additional
resources toward building a better model.

\bibliographystyle{alpha}
\bibliography{refs}

\newcommand{\etalchar}[1]{$^{#1}$}
\begin{thebibliography}{BZVGRG15}

\bibitem[BS16]{BarocasS16}
Solon Barocas and Andrew Selbst.
\newblock Big data's disparate impact.
\newblock {\em California Law Review}, 104, 2016.

\bibitem[BZVGRG15]{Zafar15}
Muhammad Bilal~Zafar, Isabel Valera, Manuel Gomez~Rodriguez, and Krishna~P
  Gummadi.
\newblock Learning fair classifiers.
\newblock {\em CoRR}, abs:1507.05259, 2015.

\bibitem[CKP09]{CaldersKP09}
T.~Calders, F.~Kamiran, and M.~Pechenizkiy.
\newblock Building classifiers with independency constraints.
\newblock In {\em In Proc.~IEEE International Conference on Data Mining
  Workshops}, pages 13--18, 2009.

\bibitem[DHP{\etalchar{+}}12]{DworkHPRZ12}
Cynthia Dwork, Moritz Hardt, Toniann Pitassi, Omer Reingold, and Richard~S.
  Zemel.
\newblock Fairness through awareness.
\newblock In {\em Proc.~ACM ITCS}, pages 214--226, 2012.

\bibitem[FFM{\etalchar{+}}15]{FeldmanFMSV15}
Michael Feldman, Sorelle~A. Friedler, John Moeller, Carlos Scheidegger, and
  Suresh Venkatasubramanian.
\newblock Certifying and removing disparate impact.
\newblock In {\em Proc.~$21$st ACM SIGKDD}, pages 259--268. ACM, 2015.

\bibitem[KMR16]{KleinbergMR16}
Jon~M. Kleinberg, Sendhil Mullainathan, and Manish Raghavan.
\newblock Inherent trade-offs in the fair determination of risk scores.
\newblock {\em CoRR}, abs/1609.05807, 2016.

\bibitem[LSL{\etalchar{+}}15]{LouizosSLWZ15}
Christos Louizos, Kevin Swersky, Yujia Li, Max Welling, and Richard~S. Zemel.
\newblock The variational fair autoencoder.
\newblock {\em CoRR}, abs/1511.00830, 2015.

\bibitem[PPM{\etalchar{+}}14]{Podesta14}
John Podesta, Penny Pritzker, Ernest~J. Moniz, John Holdren, and Jefrey Zients.
\newblock Big data: Seizing opportunities and preserving values.
\newblock {\em Executive Office of the President}, May 2014.

\bibitem[PRT08]{PedreshiRT08}
Dino Pedreshi, Salvatore Ruggieri, and Franco Turini.
\newblock Discrimination-aware data mining.
\newblock In {\em Proc.~$14$th ACM SIGKDD}, 2008.

\bibitem[Res07]{fedreserve}
US~Federal Reserve.
\newblock Report to the congress on credit scoring and its effects on the
  availability and affordability of credit, 2007.

\bibitem[RR14]{RomeiR14}
Andrea Romei and Salvatore Ruggieri.
\newblock A multidisciplinary survey on discrimination analysis.
\newblock {\em The Knowledge Engineering Review}, 29:582--638, 11 2014.

\bibitem[Was10]{Wasserman10}
Larry Wasserman.
\newblock {\em All of Statistics: A Concise Course in Statistical Inference}.
\newblock Springer, 2010.

\bibitem[Whi16]{WhiteHouse16}
Big data: A report on algorithmic systems, opportunity, and civil rights.
\newblock {\em Executive Office of the President}, May 2016.

\bibitem[Zli15]{Zliobaite15}
Indre Zliobaite.
\newblock On the relation between accuracy and fairness in binary
  classification.
\newblock {\em CoRR}, abs/1505.05723, 2015.

\bibitem[ZWS{\etalchar{+}}13]{ZemelWSPD13}
Richard~S. Zemel, Yu~Wu, Kevin Swersky, Toniann Pitassi, and Cynthia Dwork.
\newblock Learning fair representations.
\newblock In {\em Proc.~$30$th ICML}, 2013.

\end{thebibliography}

\end{document}